\newcommand{\oea}{$(1 + 1)$~EA\xspace}
\newcommand{\ea}{$(\mu + \lambda)$~EA\xspace}
\newcommand{\commaea}{$(\mu , \lambda)$~EA\xspace}
\newcommand{\ollga}{$(1 + (\lambda , \lambda))$~GA\xspace}
\newcommand{\onemax}{\textsc{OneMax}\xspace}
\newcommand{\xdivk}{\textsc{XdivK}\xspace}
\newcommand{\plateau}{\textsc{Plateau}\xspace}
\newcommand{\jump}{\textsc{Jump}\xspace}
\newcommand{\leadingones}{\textsc{LeadingOnes}\xspace}
\DeclareMathOperator{\mutate}{\textsc{Mutate}}
\DeclareMathOperator{\sign}{sign}
\newcommand{\N}{{\mathbb N}}
\newcommand{\R}{{\mathbb R}}
\newcommand{\reals}{\mathbb{R}}
\newcommand{\norm}[1]{\left\lVert#1\right\rVert}
\newtheorem{theorem}{Theorem}
\newtheorem{lemma}[theorem]{Lemma}
\newtheorem{corollary}[theorem]{Corollary}
\begin{document}

\title{Precise Runtime Analysis for Plateau Functions\footnote{This work is a significantly extended version of the PPSN 2018 paper~\cite{AntipovD18}. It completes the original work by including the mathematical proofs, which were omitted in the conference version for reasons of space, and it extends the conference version by proving the same result for mutation operators with a sub-constant probability to flip exactly one bit, by a tail bound for the runtime, and by a wider selection of applications in Section~\ref{sec:cor}.}}

\author{Denis Antipov\\
ITMO University \\
St. Petersburg \\
Russia \\
and\\
Laboratoire d'Informatique (LIX)\\
CNRS \\
\raisebox{0mm}[0mm][0mm]{\'E}cole Polytechnique \\
Institute Polytechnique de Paris \\
Palaiseau\\
France \\ 
\\
\and
Benjamin Doerr\\
Laboratoire d'Informatique (LIX)\\
CNRS \\
\raisebox{0mm}[0mm][0mm]{\'E}cole Polytechnique \\
Institute Polytechnique de Paris \\
Palaiseau\\
France \\
\\
}

\maketitle

\vspace{1cm}

\begin{abstract}
  To gain a better theoretical understanding of how evolutionary algorithms (EAs) cope with plateaus of constant fitness, we propose the $n$-dimensional $\plateau_k$ function as natural benchmark and analyze how different variants of the $(1 + 1)$~EA optimize it. The  $\plateau_k$ function has a plateau of second-best fitness in a ball of radius $k$ around the optimum. As evolutionary algorithm, we regard the $(1 + 1)$~EA using an arbitrary unbiased mutation operator. Denoting by $\alpha$ the random number of bits flipped in an application of this operator and assuming that $\Pr[\alpha = 1]$ has at least some small sub-constant value, we show the surprising result that for all constant $k \ge 2$, the runtime~$T$ follows a distribution close to the geometric one with success probability equal to the probability to flip between $1$ and $k$ bits divided by the size of the plateau. Consequently, the expected runtime is the inverse of this number, and thus only depends on the probability to flip between $1$ and $k$ bits, but not on other characteristics of the mutation operator.
  Our result also implies that the optimal mutation rate for standard bit mutation here is approximately~$k/(en)$.
  Our main analysis tool is a combined analysis of the Markov chains on the search point space and on the Hamming level space, an approach that promises to be useful also for other plateau problems.
\end{abstract}

\section{Introduction}
\sloppy

This work aims at making progress on several related subjects---we aim at understanding how evolutionary algorithms optimize non-unimodal\footnote{As common in optimization, we reserve the notion \emph{unimodal} for objective functions such that each non-optimal search point has a strictly better neighbor.} fitness functions, what mutation operators to use in such settings, how to analyze the behavior of evolutionary algorithms on large plateaus of constant fitness, and in particular, how to obtain runtime bounds that are precise including the leading constant.

The recent work~\cite{doerr-fast-ga} observed that a large proportion of the theoretical work in the past concentrates on analyzing how evolutionary algorithms optimize unimodal fitness functions and that this can lead to misleading recommendations on how to design evolutionary algorithms. Based on a precise analysis of how the \oea optimizes jump functions, it was observed that the classic recommendation to use standard bit mutation with mutation rate $\frac 1n$ is far from optimal for this function class. For jump size $k$, a speed-up of order $k^{\Theta(k)}$ can be obtained by using a mutation rate of $\frac kn$.

Jump functions are difficult to optimize because the optimum is surrounded by a large set of search points of very low fitness (all search points in Hamming distance $1$ to $k-1$ from the optimum). However, local optima are not the only feature which makes functions difficult to optimize. Another challenge for most evolutionary algorithms are large plateaus of constant fitness. On such plateaus, the evolutionary algorithm learns little from evaluating search points and consequently performs an unguided random walk. To understand this phenomenon in more detail, we propose a class of fitness functions very similar to jump functions. A \emph{plateau function} with plateau parameter $k$ is identical to a jump function with jump size $k$ except that the $k-1$ Hamming levels around the optimum do not have a small fitness, but have the same second-best fitness as the $k$-th Hamming level. Consequently, these functions do not have true local optima (in which an evolutionary algorithm could get stuck for longer time), but only a plateau of constant fitness. Our hope is that this generic fitness function with a plateau of scalable size may aid the understanding of plateaus in evolutionary computation in a similar manner as the jump functions have led to many useful results about the optimization of functions with true local optima, e.g.,~\cite{droste-ea,Jansen2002,doerr-jump,unrestricted-jump-evco,dang-crossover,FriedrichKKNNS16,CorusOY17,CorusOY18,doerr-fast-ga,DangFKKLOSS18,WhitleyVHM18,HasenohrlS18,Doerr19a,Doerr19b}.

When trying to analyze how evolutionary algorithms optimize plateau functions, we observe that the active area of theoretical analyses of evolutionary algorithms has produced many strong tools suitable to analyze how evolutionary algorithms make true progress (e.g., various form of the fitness level method~\cite{wegener-theory,sudholt-levels,lehre-popul-journal,CorusDEL18} or drift analysis~\cite{he-yao-drift-intro,multiplicative-drift-theorem,lehre-var-drift,DoerrK19}), but much less is known on how to analyze plateaus.
This is not to mean that plateaus have not been analyzed previously, see, e.g.,~\cite{garnier-binary,JansenW01,doerr-assymetric-mut-op,brockhoff,FriedrichHN09,NeumannSW09,FriedrichHN10},
but these results appear to be more ad hoc and less suitable to derive generic methods for the analysis of plateaus. In particular, with the exception of~\cite{garnier-binary}, we are not aware of any results that determine the runtime of an evolutionary algorithm on a fitness function with non-trivial plateaus precisely including the leading constant (whereas a decent number of very precise results have recently appeared for unimodal fitness functions, e.g.,~\cite{bottcher-leading-ones,doerr-sharp-bounds,witt-linear-functions,LissovoiOW17,HwangPRTC18,DoerrDL19,HwangW19}).

Such precise results are necessary for our further goal of understanding the influence of the mutation operator on the efficiency of the optimization process. Mutation is one of the most basic building blocks in evolutionary computation and has, consequently, received significant attention also in the runtime analysis literature. We refer to the discussion in~\cite{doerr-fast-ga} for a more extensive treatment of this topic and only note here that even small changes of the mutation operator or its parameters can lead to a drastic change of the efficiency of the algorithm~\cite{doerr-global-local,doerr-mutation-rate-matters}.

\textbf{Our results:} Our main result is a very general analysis of how the simplest mutation-based evolutionary algorithm, the \oea, optimizes the $n$-dimensional plateau function with plateau parameter $k \in \N$, which is considered as a constant and does not depend on $n$ when $n$ tends to the positive infinity. We allow the algorithm to use any unbiased mutation operator (including, e.g., one-bit flips, standard bit mutation with an arbitrary mutation rate, or the fast mutation operator of~\cite{doerr-fast-ga}) as long as the operator flips exactly one bit with probability $\omega(n^{-\frac{1}{2k - 2}})$. This assumption is natural, but also ensures that the algorithm can reach all points on the plateau. Denoting the number of bits flipped in an application of this operator by the random variable~$\alpha$, we prove that the expected optimization time (number of fitness evaluations until the optimum is visited) is
\[\frac{n^k}{k!\Pr[1 \le \alpha \le k]}(1 + o(1)).\]
This result, tight apart from lower order terms only, is remarkable in several respects. It shows that the performance depends very little on the particular mutation operator, only the probability to flip between $1$ and $k$ bits has an influence. The absolute runtime is also surprising --- it is the size of the plateau times the waiting time until we flip between $1$ and $k$ bits.

A similar-looking result was obtained in~\cite{garnier-binary}, namely that the expected runtime of the \oea with 1-bit mutation and with standard bit mutation with rate $\frac 1n$ on the needle function is (apart from lower order terms) the size of the plateau times the probability to flip a positive number of bits (which is $1$ for 1-bit mutation and $(1-o(1))(1-\frac 1e)$ for standard bit mutation with rate $\frac{1}{n}$).
Our result is different from that one in that we consider constrained plateaus of arbitrary (constant) radius $k \ge 2$, and more general in that we consider a wide class of unbiased mutation operators. Despite the difference in the plateaus, the expected runtime is surprisingly similar, which is the size of the plateau times the expected number of iterations until we flip between $1$ and $k$ bits (where for the needle function we can take $k = n$).

We note that there is a substantial difference between the case $k=n$ and $k$ constant. Since the needle function consists of a plateau containing the whole search space apart from the optimum, the optimization time in this case is just the hitting time of a particular search point when doing an undirected random walk (via repeated mutation) on the hypercube $\{0,1\}^n$. For $\plateau_k$ with constant~$k$, the plateau has a large boundary. More precisely, almost all\footnote{in the usual asymptotic sense, that is, meaning all but a lower order fraction} search points of the plateau lie on its outer boundary and furthermore, all these search points have almost all their neighbors outside the plateau. Hence a large number of iterations (namely almost all) are lost in the sense that the mutation operator generates a search point outside the plateau (and different from the optimum), which is not accepted. Interestingly, as our result shows, the optimization of such restricted plateaus is not necessarily significantly more difficult (relative to the plateau size) than the optimization of the unrestricted needle plateau.

Our precise runtime analysis allows to deduce a number of particular results. For example, when using standard bit mutation, the optimal\footnote{We call a mutation rate optimal when it delivers an expected runtime that differs from the truly optimal one %differs from the truly optimal rate
at most by lower order terms, that is, e.g. a factor of $(1 \pm o(1))$. This suggests that there might be a range of optimal rates, however without proof we note that changing the mentioned optimal mutation rate by a factor of $(1 \pm \Omega(1))$ would also increase the runtime by a $(1 + \Omega(1))$ factor.}
mutation rate is $\frac{\sqrt[k]{k!}}n$, that is, approximately $\frac k{en}$. This is by a constant factor less than the optimal rate of $\frac kn$ for the jump function with jump size $k$, but again a factor of $\Theta(k)$ larger than the classic recommendation of~$\frac 1n$, which is optimal for many unimodal fitness functions. Hence our result confirms that optimal mutation rates can be significantly higher for non-unimodal fitness functions. While the optimal mutation rates for jump and plateau functions are similar, the effect of using the optimal rate is very different. For jump functions, an $k^{\Theta(k)}$ factor speed-up (compared to the standard recommendation of $\frac 1n$) was observed, here the influence of the mutation operator is much smaller, namely the factor $\Pr[1 \le \alpha \le k]$,
which is trivially at most $1$, but which was assumed to be at least some positive constant.
Interestingly, our results imply that the fast mutation operator described in~\cite{doerr-fast-ga} is not more effective than other unbiased mutation operators, even though it was proven to be significantly more effective for jump functions~\cite{doerr-fast-ga} and it has shown good results in some practical problems~\cite{MironovichB17}.

So one structural finding, which we believe to be true for larger classes of problems and which fits to the result~\cite{garnier-binary} for needle functions, is that the mutation rate, and more generally, the particular mutation operator which is used, is less important while the evolutionary algorithm is traversing a plateau of constant fitness.

The main technical novelty in this work is that we model the optimization process via two different Markov chains describing the random walk on the plateau, namely the chain defined on the $\Theta(n^k)$ elements of the plateau (plus the optimum) and the chain obtained from aggregating these into the total mass on the Hamming levels. Due to the symmetry of the process, one could believe that it suffices to regard only the level chain. The chain defined on the elements, however, has some nice features which the level chain is missing, among others, a symmetric transition matrix (because for any two search points $x$ and $y$ on the plateau, the probability of going from $x$ to $y$ is the same as the probability of going from $y$ to $x$). This symmetry allows us to analyse the speed of convergence to some distribution over the points of the plateau by using some ideas similar to the ones used in~\cite{Vitanyi00} for the analysis of the rapidly mixing Markov chains.
For this reason, we find it fruitful to switch between the two chains. Exploiting the interplay between the two chains and using classic methods from linear algebra, we find the exact expression for the expected runtime.

The most valuable insight given by this approach is that the mixing of the probability mass over the plateau is very fast. More precisely, we show that independently of the first position on the plateau, in slightly more than $\Theta(\sqrt{n}\log(n))$ iterations we are almost equally likely to be at any point of the plateau. A similar mixing argument was used to prove the upper bound on the runtime of the \oea on the \leadingones with strong prior noise in~\cite{Sudholt20}. There, however, only an exponential mixing time was shown, although the author conjectures that it should be polynomial. Our analysis based on the interplay of two Markov chains is problem-specific (e.g., we base our arguments on the symmetry of the plateau), but we are optimistic that the observed behavior of a small mixing time can be also seen on other plateaus which are not too easy to leave.

The rest of the paper has the following structure. In Section~\ref{statement} we describe the \oea, the operators it uses and the problem on which we analyse the algorithm. In Section~\ref{sec:notation} we list the mathematical means that are used in our analysis. We also introduce the central tool of our analysis --- the two Markov chains, show their properties and the connection between the two chains. In Section~\ref{runtime} we prove the main result of this work, which is, the precise runtime of the \oea on the $\plateau_k$ function for constant $k$. The corollaries from the main result, which are, the precise runtime of different variants of the \oea, are shown in Section~\ref{sec:cor}. Finally, we summarize the results in Section~\ref{conclusion}.   

\section{Problem Statement}\label{statement}

We consider the maximization of a function defined on the space of bit-strings of length $n$ which resembles the \onemax function, but has a plateau of second-highest fitness of radius $k$ around the optimum. We call this function $\plateau_k$ and define it as follows.

\begin{align*}
  \plateau_k(x) :=
  \begin{cases}
    % \onemax(x), &\text{ if } \onemax(x) \le n - k, \\
    n - k, &\text{ if } n - k < \onemax(x) < n, \\
    % n, &\text{ if } \onemax(x) = n, \\
    \onemax(x), &\text{ otherwise,}
  \end{cases}
\end{align*}
where $\onemax(x) \coloneqq \norm{x}_1$ is
the number of one-bits in $x \in \{0, 1\}^n$.

Notice that the plateau of the function $\plateau_k(x)$ consists of all bit-strings that have at least $n - k$ one-bits, except the optimal bit-string $x^* = (1,\dots,1)$. See Fig.~\ref{fig:plateau} for an illustration of $\plateau_k$.

\begin{figure}
  \begin{center}
   \begin{tikzpicture}

    \draw [dashed] (0, 3.5) -- (5, 3.5) -- (5, 0);
    \draw [dashed] (0, 2.8) -- (4, 2.8) -- (4, 0);

    \draw [{<[scale=1.5]}-{>[scale=1.5]}] (0, 4) -- (0, 0) -- (6, 0);
    \draw [very thick] (0, 0) -- (4, 2.8) -- (5, 2.8);

    \draw [fill=white] (5, 2.8) circle (0.7mm);
    \draw [fill=black] (5, 3.5) circle (0.7mm);

    \node [above] at (6.7, -0.7) {$\onemax(x)$};
    \node [above] at (5, -0.5) {$n$};
    \node [above] at (4, -0.535) {$n - k$};
    \node [left] at (0, 2.8) {$n - k$};
    \node [left] at (0, 3.5) {$n$};
    \node [left] at (0, 4) {$\plateau_k(x)$};

   \end{tikzpicture}
  \end{center}
  \caption{Plot of the \plateau function. As a function of unitation, the function value of a search point $x$ depends only on the number of one-bits in~$x$.}
  \label{fig:plateau}
\end{figure}

To compare the results of our analysis to the best runtime which could be obtained by an algorithm using only unbiased operators, we note that the \emph{unary unbiased black-box complexity} (see~\cite{LehreW12} for the definition) of $\plateau_k$ is $\Theta(n \log n)$ for all constants $k$. While this implies that there is a unary unbiased black-box algorithm finding the optimum of $\plateau_k$ in $O(n \log n)$ time, such results generally do not indicate that a problem is easy for reasonable evolutionary algorithms. For example, in~\cite{DoerrDK14artint} it was shown that the  NP-complete partition problem also has a unary unbiased black-box complexity of $O(n \log n)$.

\begin{lemma}
  For all constants $k$, the unary unbiased black-box complexity of the $\plateau_k$ function is $\Theta(n \log n)$.
\end{lemma}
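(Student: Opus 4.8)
The plan is to prove the two bounds separately. For the lower bound I would simply invoke the general result of Lehre and Witt~\cite{LehreW12}, which states that the unary unbiased black-box complexity of any pseudo-Boolean function with a unique global optimum is $\Omega(n \log n)$. Since $\plateau_k$ has the single optimum $x^* = (1,\dots,1)$, and every function in its unbiased equivalence class, obtained by the XOR-shifts $x \mapsto \plateau_k(x \oplus z)$ (permutations fix $\plateau_k$, as it is symmetric), likewise has a unique optimum $\bar z$, this immediately gives the lower bound $\Omega(n \log n)$ for every constant $k$.

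The interesting part is the matching upper bound, for which I would exhibit a concrete unary unbiased algorithm that uses only the one-bit-flip operator (which is unbiased) together with unrestricted memory, and show it finds the optimum in expected $O(n \log n)$ fitness evaluations. The first phase exploits that for any search point $x$ at Hamming distance $d > k$ from the optimum the fitness equals $n - d$, i.e.\ outside the plateau the algorithm sees a genuine \OM-gradient. Running randomized local search (flip one random bit, keep the better point) therefore drives the current point from a random start down to a point $x_0$ at distance exactly $k$ in $\sum_{d=k+1}^{n} \frac{n}{d} = O(n \log n)$ expected steps; as $x_0$ is reached from a distance-$(k+1)$ point by a single improving flip, it lies exactly on the plateau boundary.

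The main obstacle is the second phase: on the plateau all fitnesses equal $n - k$, so there is no gradient, and, more subtly, an unbiased operator cannot flip a prescribed bit even once the optimum is known. The key observation that overcomes this is that \emph{at the boundary point $x_0$ a single bit flip is informative}: if the flipped bit differs from $x^*$ (a ``wrong'' bit, of which there are exactly $k$) the distance drops to $k-1$ and the fitness stays $n - k$, whereas if the flipped bit agrees with $x^*$ the distance grows to $k+1$ and the fitness drops to $n - k - 1$. Hence, repeatedly applying the one-bit-flip operator to $x_0$ (which stays available in memory) and reading which bit was flipped together with the resulting fitness classifies that bit as wrong or correct. By a coupon-collector argument, after $O(n \log n)$ probes every position has been flipped at least once, so the algorithm learns the set $W$ of all $k$ wrong positions, and thereby $x^*$ itself.

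Finally, I would fix the wrong bits one at a time: starting from $x_0$, repeatedly flip one random bit, accept the offspring iff the flipped bit lies in the still-unfixed part of $W$ (which the algorithm can check, since it knows $W$ and reads the flipped bit), and discard it otherwise. Each acceptance reduces the number of remaining wrong bits by one and needs an expected $n/j$ trials when $j$ of them remain, for a total of $n H_k = O(n)$ steps as $k$ is constant; after $k$ acceptances the current point equals $x^*$. Summing the three phases yields $O(n \log n)$. I expect the delicate points to be (i) arguing cleanly that selecting parents from the full history and reading flipped bits is permitted in the unbiased model while the one-bit-flip distribution remains unbiased, and (ii) handling the boundary case $k = 1$ (an empty plateau, where Phase~1 already reaches the optimum) and the hand-offs between phases.
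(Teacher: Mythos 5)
Your lower bound is sound: Lehre and Witt indeed prove that \emph{every} function with a unique global optimum has unary unbiased black-box complexity $\Omega(n \log n)$, which is an equally valid (and slightly more direct) route than the paper's reduction, which instead writes $\plateau_k = f \circ \OM$ and transfers any $\plateau_k$-algorithm into a $\OM$-algorithm. Phase~1 of your upper bound (RLS reaches the plateau boundary in $O(n\log n)$ steps) is also fine.

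The gap is in Phases~2 and~3. In the unary unbiased black-box model the algorithm's decisions may depend only on the sequence of observed \emph{fitness values}; it never sees the queried search points, so it cannot ``read which bit was flipped''. This is not a delicate point that can be argued away --- it is the defining restriction of the model, and the reason the $\Omega(n\log n)$ lower bound holds at all. Without bit access your classification scheme collapses: a one-bit-flip probe from the boundary point $x_0$ at distance $k$ does reveal, via the fitness $n-k$ versus $n-k-1$, \emph{whether} a wrong or a correct bit was flipped, but not \emph{which} position it was, so the set $W$ cannot be assembled and Phase~3 (accept iff the flipped bit lies in $W$) is unimplementable. The natural fitness-only repair --- accept the offspring whenever the fitness stays $n-k$ --- moves you to distance $k-1$, but from there every single-bit neighbour (at distance $k-2$ or $k$) again has fitness $n-k$, so the probes become uninformative and the process degenerates into exactly the $\Theta(n^k)$ random walk analyzed in the main part of the paper. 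The missing ingredient is a unary unbiased subroutine that determines the Hamming distance of a near-optimal point to the optimum using only fitness evaluations of points \emph{outside} the plateau (flip $\Theta(n)$ random bits and exploit concentration of the resulting $\OM$-value, repeating a constant expected number of times); with it one simulates an $O(n\log n)$ black-box algorithm for $\OM$. This is precisely the device from the $O(n\log n)$ upper bound for $\jump_k$ that the paper's proof cites.
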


\begin{proof}
  The lower bound follows from the $\Omega(n \log n)$ lower bound for the unary unbiased black-box complexity of $\onemax$ shown in~\cite{LehreW12}. Since we can write $\plateau_k = f \circ \onemax$ for a suitable function $f$ (such that $f(x) = x$, if $x \notin [n-k..n]$ and $f(x) = n - k$ otherwise), any algorithm solving $\plateau_k$ can be transferred into an algorithm which treats all points with fitness in $[n - k..n - 1]$ as points with fitness $(n - k)$ and therefore solving $\onemax$ in the same time.

  The upper bound follows along the same lines as the $O(n \log n)$ upper bound for the unary unbiased black-box complexity of $\jump_k$, see~\cite{doerr-jump} and note that the algorithm given there contains a sub-routine which, in expected constant time, for a given constant radius $r$ determines the Hamming distance $H(x,x^*)$ of a point $x$ from the optimum $x^*$ without evaluating search points $y$ with $H(y,x^*) \le r$. Note that the Hamming distance from the optimum determines the $\onemax$ value of $x$. Hence with this routine one can optimize both jump and plateau functions by simulating an $O(n \log n)$ black-box algorithm for $\onemax$.
\end{proof}

To understand how evolutionary algorithms optimize plateau functions, we consider the most simple evolutionary algorithm, the \oea shown in Algorithm~\ref{alg:oea}. However, we allow the use of an arbitrary unbiased mutation operator. A mutation operator $\mutate$ for bit-string representations is called \emph{unbiased} if it is symmetric in the bit-positions $[1..n]$ and in the bit-values $0$ and $1$. This is equivalent to saying that for all $x \in \{0,1\}^n$ and all automorphisms $\sigma$ of the hypercube $\{0,1\}^n$ (respecting Hamming neighbors) we have $\sigma^{-1}(\mutate(\sigma(x)) = \mutate(x)$, which is an equality of distributions. The notation of unbiasedness was introduced (also for higher-arity operators) in the seminal paper~\cite{LehreW12}.

\begin{algorithm}
	$x \gets $ random bit string of length $n$\; 
    \While{not terminated}
        {
        $y \gets \mutate(x)$\;
        \If{$f(y) \ge f(x)$}
            {
             $x \gets y$\;   
            }
        }
		\caption{The fast \oea with a generic mutation operator maximizing $f:\{0,1\}^n\to\R$}
		\label{alg:oea}
\end{algorithm}

For our purposes, it suffices to know
that the set of unbiased mutation operators consists of all operators which can be described as follows. First, we choose a number $\alpha \in [0..n]$ according to some probability distribution and then we flip exactly $\alpha$ bits chosen uniformly at random. Examples for unbiased operators are the operator of Random Local Search, which flips a single random bit, or standard bit mutation, which flips each bit independently with probability $\frac{1}{n}$. Note that in the first case $\alpha$ is always equal to one, whereas in the latter $\alpha$ follows a binomial distribution with parameters $n$ and~$\frac 1n$. This characterization can be derived from~\cite[Proposition~19]{DoerrKLW13tcs}. It was explicitly stated in~\cite{DoerrDY20}.

\paragraph{Additional assumptions:} The class of unbiased mutation operators contains a few operators which are unable to solve even very simple problems. For example, operators that always flips exactly two bits never finds the optimum of any function with unique optimum if the initial individual has an odd Hamming distance from the optimum. To avoid such artificial difficulties, we only consider unbiased operators that have at least $\omega(n^{-\frac{1}{2k-2}})$ probability to flip exactly one bit.

As usual in runtime analysis, we are interested in the optimization behavior for large problem size $n$. Formally, this means that for each fixed $k$ we view the runtime $T_k = T_k(n)$ as a function of $n$ and aim at understanding its asymptotic behavior for $n$ tending to infinity. We aim at sharp results (including finding the leading constant), that is, we try to find a simple function $\tau_k: \N \to \R$ such that $T_k(n) = (1+o(1)) \tau_k(n)$, which is equivalent to saying that $\lim_{n \to \infty} T_k(n)/\tau_k(n) = 1$. In this limit sense, however, we treat $k$ as a constant, that is, $k$ is a given positive integer and not also a function of $n$.

Finally, since the case $k=1$ is well-understood ($\plateau_1$ is the well-known $\onemax$ function), we always assume $k \ge 2$.

\section{Preliminaries and Notation}\label{sec:notation}

\subsection{Tools from Linear Algebra}\label{sec:linear_algebra}

In this section we brief{}ly review the terms, tools and facts from the linear algebra that we use in this work.

We use $\N$ to denote the set of all positive integer numbers and we use $\N_0$ to denote $\N \cup \{0\}$. We denote the vector of length $n$ that consists only of ones by $1^n$ and the vector of length $n$ that consists only of zeros by $0^n$.

% \textbf{Definition.}
Given the square matrix $A$, the vector $x$ is called the \emph{left eigenvector} of the matrix $A$ if $xA = \lambda x$ for some $\lambda \in \mathbb{C}$. In this situation, $\lambda$ is called \emph{eigenvalue} of the matrix $A$. The vector $x$ is called \emph{right eigenvector} if $Ax = \lambda x$ for some $\lambda \in \mathbb{C}$. Since in this work we regard only left eigenvectors, we call them just \emph{eigenvectors}.

% \textbf{Definition.}
The \emph{spectrum} of a matrix is the set of all its eigenvalues. If a matrix has size $n \times n$, then the number of its eigenvalues is not greater than $n$. For each eigenvalue there exists a corresponding \emph{eigenspace}, that is, the linear span of all the eigenvectors that correspond to the eigenvalue.

The only point shared by any two eigenspaces that correspond to two different eigenvalues is $0^n$.

% \textbf{Definition.}
The \emph{characteristic polynomial} $\chi(\lambda)$ of matrix $A$ is the function of $\lambda$ that is defined as the determinant of the matrix $A-\lambda I$, where $I$ is the identity matrix. The set of roots of the characteristic polynomial equals the spectrum of the matrix $A$.

% \textbf{Definition.}
The inner product of the vectors $x = (x_0, \dots, x_{n - 1})$ and $y = (y_0, \dots, y_{n - 1})$ is a scalar value defined by $\langle x, y \rangle = \sum_{i = 0}^{n - 1}x_i y_i.$
% \textbf{Definition.}
The two vectors are \emph{orthogonal} if their inner product is zero.

% \textbf{Definition.}
For every \emph{diagonalizable} matrix $A$ of size $n \times n$ there exists a set $\{e^i\}_{i = 0}^{n - 1}$ of eigenvectors that form a \emph{basis} of $\reals^n$.
A basis is called \emph{orthogonal} when all pairs of the basis vectors are orthogonal.
% \textbf{Definition.}
A matrix $A = (a_i^j)$ is \emph{symmetric} if for every $i$ and $j$ we have $a_i^j = a_j^i$.

We use the following two properties of symmetric matrices.

\begin{lemma}\label{lm:sym-eigenvalues}
All eigenvalues of a symmetric matrix are real.
\end{lemma}
\begin{lemma}\label{lm:sym-eigenvectors}
  Two eigenvectors of a symmetric matrix that correspond to different eigenvalues are orthogonal. Also every symmetric matrix of size $n \times n$ is diagonalizable, which means that there exists an orthogonal basis of $\reals^n$ which consist of eigenvectors of this matrix.
\end{lemma}

In this work we also encounter \emph{irreducible} matrices. Among the several definitions, the following is the easiest to check for the non-negative matrices considered in this work. For each non-negative matrix $A$ of size $n \times n$ we can build a directed graph $G_A$ by taking an empty graph on $n$ vertices and adding for each non-negative component $a_i^j$ of $A$ an edge from vertex $i$ to vertex $j$. 
% \textbf{Definition.}
Then a matrix $A$ is \emph{irreducible} if and only if graph $G_A$ is strongly connected.

For example, the transition matrix of an irreducible Markov chain (a chain such that each state is reachable from each other state) is irreducible.

A crucial role in this work is played by the Perron-Frobenius theorem~\cite{perron-frobenius}. This theorem gives a series of properties of the irreducible matrices, among them we use the following four.
\begin{theorem}[Perron-Frobenius]\label{lm:p-f}
  Any irreducible non-negative matrix $A$ has the following properties.
\begin{itemize}
  \item The largest eigenvalue $\lambda_0$ of $A$ lies between the minimal and the maximal row sum of $A$.
  \item For every eigenvalue $\lambda$ of $A$ different from the largest eigenvalue~$\lambda_0$ we have $|\lambda| < \lambda_0$.
  \item The largest eigenvalue of $A$ has a one-dimensional eigenspace.
  \item There exists an eigenvector which corresponds to the largest eigenvalue $\lambda_0$ all components of which are strictly positive.
\end{itemize}
\end{theorem}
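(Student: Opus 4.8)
This is the classical Perron--Frobenius theorem, so the pragmatic option is simply to invoke the cited reference~\cite{perron-frobenius}; the instructive self-contained route, however, is to first manufacture the dominant eigenvalue together with strictly positive left and right eigenvectors, and then to read off all four listed properties as consequences. The plan is therefore to fix a nonnegative irreducible matrix $A$ of size $n \times n$ and to build the whole argument around one positive right eigenvector $v$ and one positive left eigenvector $u$.

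First I would establish existence. Restricting the continuous map $x \mapsto Ax / \norm{Ax}_1$ to the standard simplex $\{x \ge 0^n : \langle x, 1^n \rangle = 1\}$ and applying Brouwer's fixed-point theorem yields a vector $v \ge 0^n$ with $Av = \lambda_0 v$ for some $\lambda_0 > 0$; irreducibility rules out zero columns, so $\norm{Ax}_1 > 0$ on the simplex and the map is well defined. Next I would upgrade $v \ge 0^n$ to $v > 0^n$: since $A$ is irreducible, reachability makes $(I + A)^{n-1}$ entrywise positive, and because $(I + A)^{n-1} v = (1 + \lambda_0)^{n-1} v$, the left-hand side is a strictly positive vector whenever $v \neq 0^n$, forcing every component of $v$ to be positive. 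Applying the same construction to the transpose produces a strictly positive left eigenvector $u$, whose eigenvalue must coincide with $\lambda_0$ because pairing $u$ against $v$ gives $\langle u, Av\rangle = \lambda_0 \langle u, v\rangle$ with $\langle u, v\rangle > 0$.

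With $u$ and $v$ in hand the remaining properties are short. For any eigenpair $(\mu, w)$ the triangle inequality gives the componentwise bound $|\mu|\,|w| = |Aw| \le A|w|$, and pairing with the positive left eigenvector yields $\lambda_0 \langle u, |w|\rangle = \langle u, A|w|\rangle \ge |\mu|\langle u,|w|\rangle$, hence $|\mu| \le \lambda_0$ and $\lambda_0$ is the largest eigenvalue. The first property is then immediate: summing $\lambda_0 u_j = \sum_i u_i a_i^j$ over $j$ exhibits $\lambda_0 = \frac{\sum_i u_i R_i}{\sum_i u_i}$ as a weighted average of the row sums $R_i = \sum_j a_i^j$, so it lies between the smallest and largest row sum. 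For the third property I would take any real eigenvector $v'$ for $\lambda_0$ and let $t$ be the threshold at which $v - t v'$ first acquires a zero coordinate while remaining nonnegative; the resulting nonnegative eigenvector with a vanishing coordinate must, by the positivity argument above, be the zero vector, so $v' = t^{-1} v$ and the eigenspace is one-dimensional. The fourth property follows at once, since every eigenvector for $\lambda_0$ is a scalar multiple of $v > 0^n$ and thus has all components of one sign.

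The step I expect to be the main obstacle is the strict inequality $|\lambda| < \lambda_0$ of the second property, precisely because irreducibility alone does not suffice: a general irreducible matrix has as many eigenvalues of modulus $\lambda_0$ as its period. The delicate part is the equality analysis of $|Aw| \le A|w|$, which shows that any peripheral eigenvalue must be $\lambda_0$ times a root of unity whose order divides the period. To close this gap I would verify that the transition matrices actually arising here are \emph{primitive}, which in our setting is easy: a rejected mutation (or flipping no bits) leaves the current search point unchanged, so the relevant chains have positive diagonal entries, hence no nontrivial period, and $\lambda_0$ is the unique eigenvalue of maximal modulus.
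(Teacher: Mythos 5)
The paper does not prove this statement at all: it is quoted as a classical result with a pointer to the reference~\cite{perron-frobenius}, so any self-contained argument is by definition a different route. Your sketch of the Perron--Frobenius machinery is sound: the Brouwer fixed-point construction of a nonnegative eigenvector, the upgrade to strict positivity via the entrywise positivity of $(I+A)^{n-1}$, the weighted-average identity $\lambda_0 = \sum_i u_i R_i / \sum_i u_i$ for the row-sum bound, and the ``first zero coordinate'' argument for one-dimensionality are all standard and correct. More importantly, you correctly point out that the second bullet is \emph{false} for general irreducible nonnegative matrices (a periodic chain such as the $2\times 2$ permutation matrix has $-\lambda_0$ in its spectrum), so the statement as printed needs primitivity rather than mere irreducibility --- a genuine imprecision in the paper that the bare citation glosses over. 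This happens to be harmless for the paper, because the only place a quantitative spectral gap is needed (Lemma~\ref{lm:spectrum}) is established there directly from the characteristic polynomial rather than from this theorem.

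One caveat on your proposed repair: your claim that the relevant chains have \emph{all} diagonal entries positive is not quite accurate. Under randomized local search ($\Pr[\alpha=1]=1$), a search point on an interior level of the plateau always produces an accepted offspring on a different level, so $p_i^i = 0$ for $i \in [1..k-1]$. What is true is that level $0$ (and every individual on it) retains a constant self-loop probability, because mutations falling off the plateau are rejected; a single state with a self-loop in an irreducible chain already forces aperiodicity, hence primitivity, so your fix survives with this weaker observation.
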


% \textbf{Definition.}
When talking about vector norms, we use the following notation. For any $p \in (0, +\infty)$ and any vector $x \in \reals^n$, we let
\begin{align*}
  \norm{x}_p = \left(\sum_{j = 0}^{n - 1} |x_j|^p\right)^{1/p}.
\end{align*}
In this work we use only the Manhattan norm ($p = 1$) and the Euclidean norm ($p = 2$). We use the following properties of these norms.

\begin{lemma}\label{lm:norm-manhattan-euclidean}
For all $x \in \reals^n$ we have
\begin{align*}
    \norm{x}_2 \le \norm{x}_1 \le \sqrt{n} \norm{x}_2.
\end{align*}
\end{lemma}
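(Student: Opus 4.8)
The plan is to establish the two inequalities separately, each of which reduces to a short elementary computation after squaring; both bounds are classical, so the only real choice is which route keeps the argument cleanest.

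First I would treat the left inequality $\norm{x}_2 \le \norm{x}_1$. Since both quantities are nonnegative, it suffices to compare their squares. Here $\norm{x}_2^2 = \sum_{j=0}^{n-1} |x_j|^2$, whereas expanding the Manhattan norm gives
\[
  \norm{x}_1^2 = \left(\sum_{j=0}^{n-1} |x_j|\right)^2 = \sum_{j=0}^{n-1} |x_j|^2 + \sum_{i \ne j} |x_i|\,|x_j|.
\]
All the cross terms $|x_i|\,|x_j|$ are nonnegative, so the right-hand side is at least $\sum_j |x_j|^2 = \norm{x}_2^2$. Taking square roots yields $\norm{x}_2 \le \norm{x}_1$.

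For the right inequality $\norm{x}_1 \le \sqrt{n}\,\norm{x}_2$ I would invoke the Cauchy--Schwarz inequality for the inner product $\langle \cdot, \cdot \rangle$ introduced in the preliminaries. Writing the vector of absolute values as $a = (|x_0|, \dots, |x_{n-1}|)$, we have $\norm{x}_1 = \langle a, 1^n \rangle$, and Cauchy--Schwarz gives $\langle a, 1^n \rangle \le \norm{a}_2 \cdot \norm{1^n}_2 = \norm{x}_2 \cdot \sqrt{n}$, using that $\norm{1^n}_2 = \sqrt{n}$ and that passing to absolute values leaves the Euclidean norm unchanged.

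I expect essentially no genuine obstacle, since both statements are textbook facts. The only point worth a moment's care is that the right inequality relies on Cauchy--Schwarz; should one prefer a fully self-contained argument avoiding that citation, the same bound follows from the identity $n\norm{x}_2^2 - \norm{x}_1^2 = \sum_{i < j} (|x_i| - |x_j|)^2 \ge 0$, which is a direct expansion and makes the nonnegativity manifest.
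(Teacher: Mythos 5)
Your proof is correct and complete; both the squaring argument for $\norm{x}_2 \le \norm{x}_1$ and the Cauchy--Schwarz (or equivalently the sum-of-squares identity) argument for $\norm{x}_1 \le \sqrt{n}\,\norm{x}_2$ are the standard routes. The paper states this lemma in its preliminaries as a known fact and gives no proof at all, so there is nothing to compare against; your write-up supplies exactly the argument one would expect.
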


The following lemma is often called \emph{triangle inequality}
\begin{lemma}\label{lm:triangle}
  For any norm $\norm{\cdot}$ and for every $x$, $y$ and $z = x + y$ we have
  \begin{align*}
    \norm{x} - \norm{y} \le \norm{z} \le \norm{x} + \norm{y}
  \end{align*}
\end{lemma}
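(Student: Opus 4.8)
The plan is to derive both inequalities from the two defining properties of a norm that we are implicitly allowed to use: subadditivity, $\norm{a + b} \le \norm{a} + \norm{b}$ for all vectors $a, b$, and absolute homogeneity, $\norm{\lambda a} = |\lambda|\,\norm{a}$ for all scalars $\lambda$. The upper bound is essentially the norm axiom itself, while the lower bound follows from a single algebraic rewriting together with homogeneity. Since the statement is for an \emph{arbitrary} norm, I would resist the temptation to compute anything coordinatewise and instead work purely from these axioms.

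First I would dispose of the right-hand inequality. Since $z = x + y$, subadditivity applied to $a = x$ and $b = y$ gives immediately
\[
  \norm{z} = \norm{x + y} \le \norm{x} + \norm{y},
\]
which is the claimed upper bound with nothing further to do.

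For the left-hand inequality I would express $x$ as a sum involving $z$, namely $x = (x + y) + (-y) = z + (-y)$, and apply subadditivity a second time to obtain $\norm{x} = \norm{z + (-y)} \le \norm{z} + \norm{-y}$. Absolute homogeneity with $\lambda = -1$ yields $\norm{-y} = |-1|\,\norm{y} = \norm{y}$, so $\norm{x} \le \norm{z} + \norm{y}$, and rearranging gives $\norm{x} - \norm{y} \le \norm{z}$, as required. I do not expect any real obstacle here: the only point requiring care is the bookkeeping of which vector plays the role of the sum, i.e.\ recognizing that one should feed $z$ and $-y$ (rather than $x$ and $y$) into the subadditivity axiom. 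As a sanity check, both norms actually used in the paper, the Manhattan norm $\norm{\cdot}_1$ and the Euclidean norm $\norm{\cdot}_2$, are genuine norms satisfying these axioms, so Lemma~\ref{lm:triangle} applies to them without any additional verification.
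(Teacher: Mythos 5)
Your proof is correct: the upper bound is just subadditivity, and the lower bound follows from writing $x = z + (-y)$ and using absolute homogeneity, which is the standard argument. The paper itself states this lemma without proof as a classical fact, so there is nothing to compare against; your derivation fills that (intentional) gap correctly.
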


We use the following properties of the Euclidean norm.
\begin{lemma}\label{lm:euclidean-norm-1}
  If vectors $x^1, \dots, x^n$ are orthogonal, then for any values $a_1, \dots, a_n \in \reals$ we have
  \begin{align*}
    \norm{\sum_{i = 1}^n a_i x^i}_2 \le \max_{i \in [1..n]}|a_i| \norm{\sum_{i = 1}^n x^i}_2
  \end{align*}
\end{lemma}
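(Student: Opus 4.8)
The plan is to reduce everything to the Pythagorean identity for the Euclidean norm, which converts the orthogonality hypothesis into a clean statement about sums of squared norms. First I would expand the squared left-hand side via the inner product:
\[
\norm{\sum_{i=1}^n a_i x^i}_2^2 = \left\langle \sum_{i=1}^n a_i x^i, \sum_{j=1}^n a_j x^j \right\rangle = \sum_{i=1}^n \sum_{j=1}^n a_i a_j \langle x^i, x^j \rangle.
\]
Since the $x^i$ are pairwise orthogonal, every cross term with $i \neq j$ vanishes, leaving $\sum_{i=1}^n a_i^2 \norm{x^i}_2^2$. The identical computation with all coefficients replaced by $1$ gives $\norm{\sum_{i=1}^n x^i}_2^2 = \sum_{i=1}^n \norm{x^i}_2^2$.

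Next I would bound each coefficient $a_i^2$ from above by $(\max_{j \in [1..n]} |a_j|)^2$, a constant independent of $i$, and pull it out of the sum:
\[
\sum_{i=1}^n a_i^2 \norm{x^i}_2^2 \le \left(\max_{j \in [1..n]} |a_j|\right)^2 \sum_{i=1}^n \norm{x^i}_2^2 = \left(\max_{j \in [1..n]} |a_j|\right)^2 \norm{\sum_{i=1}^n x^i}_2^2.
\]
Taking square roots of both (nonnegative) sides, using monotonicity of the square root on $[0,\infty)$, then yields exactly the claimed inequality.

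As for difficulties, there is no serious obstacle here; the statement is an elementary consequence of orthogonality, and I expect the whole argument to occupy only a few lines once the Pythagorean identity is invoked. The only points that warrant a moment of care are, first, confirming that orthogonality is meant with respect to the Euclidean inner product $\langle \cdot, \cdot \rangle$ introduced earlier, so that the cross-term cancellation above is legitimate; and second, noting that the argument degrades gracefully in degenerate cases such as some $x^i = 0^n$ or all $a_i = 0$, where both sides vanish or the bound holds trivially. Hence no case distinction is really needed.
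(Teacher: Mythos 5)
Your proof is correct: the Pythagorean expansion via orthogonality, the pointwise bound $a_i^2 \le (\max_j |a_j|)^2$, and the final square root are all sound. The paper states this lemma without proof as a standard property of the Euclidean norm, and your argument is exactly the standard one that the authors implicitly rely on.
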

\begin{lemma}\label{lm:euclidean-norm-2}
  If vectors $x^1, \dots, x^n$ are orthogonal, then for any subset $S \subset [1..n]$ we have
  \begin{align*}
    \norm{\sum_{i \in S} x^i}_2 \le \norm{\sum_{i = 1}^n x^i}_2
  \end{align*}
\end{lemma}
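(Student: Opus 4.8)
The plan is to reduce the statement to the Pythagorean identity for pairwise orthogonal vectors and then exploit the non-negativity of squared Euclidean norms. The key observation is that for orthogonal vectors the squared Euclidean norm of a sum splits into a sum of squared norms, because all mixed inner products vanish. Everything else is just monotonicity of a sum of non-negative terms together with the monotonicity of the square-root function on $[0, +\infty)$.

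First I would compute, for an arbitrary index set $S \subseteq [1..n]$,
\[
  \norm{\sum_{i \in S} x^i}_2^2 = \Big\langle \sum_{i \in S} x^i, \sum_{j \in S} x^j \Big\rangle = \sum_{i \in S} \sum_{j \in S} \langle x^i, x^j \rangle = \sum_{i \in S} \norm{x^i}_2^2,
\]
where the last equality uses that $\langle x^i, x^j \rangle = 0$ whenever $i \ne j$ by orthogonality, so that only the diagonal terms survive, and that $\langle x^i, x^i \rangle = \norm{x^i}_2^2$ by the definition of the Euclidean norm.

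Then I would apply this identity twice, once to the subset $S$ and once to the full index set $[1..n]$. Since every squared norm $\norm{x^i}_2^2$ is non-negative and $S \subseteq [1..n]$, dropping the terms with indices outside $S$ can only decrease the sum, which gives
\[
  \norm{\sum_{i \in S} x^i}_2^2 = \sum_{i \in S} \norm{x^i}_2^2 \le \sum_{i = 1}^{n} \norm{x^i}_2^2 = \norm{\sum_{i = 1}^{n} x^i}_2^2.
\]
Taking square roots of both (non-negative) sides, using that $t \mapsto \sqrt{t}$ is monotone increasing, yields the claimed inequality.

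There is essentially no hard part here; the only point I would be careful to state explicitly is the orthogonality convention, namely that the family $x^1, \dots, x^n$ is understood to be \emph{pairwise} orthogonal, so that indeed all cross terms $\langle x^i, x^j \rangle$ with $i \ne j$ vanish. This is precisely what makes the Pythagorean splitting in the first display valid, and it is the same hypothesis already used in Lemma~\ref{lm:euclidean-norm-1}.
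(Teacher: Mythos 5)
Your proof is correct: the Pythagorean identity for pairwise orthogonal vectors reduces the claim to dropping non-negative terms from a sum, and the monotonicity of the square root finishes it. The paper states this lemma without proof as a standard property of the Euclidean norm, and your argument is exactly the canonical justification, so there is nothing to compare or criticise.
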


We also make a use of the orthogonal projection of vectors, which is defined as follows.
Suppose we have vector $x \in \reals^n$ and it is decomposed into the sum of $m$ orthogonal vectors $\{x^i\}_{i = 0}^{m - 1}$ where $m \le n$. Then $x^i$ is the \emph{orthogonal projection} of $x$ to the linear span of $x^i$.
To calculate precisely the norm of the projection, we use the following lemma.
\begin{lemma}\label{lm:projection}
If $x^i$ is the orthogonal projection of $x$, then for any norm $\norm{\cdot}$ we have
\begin{align*}
  \norm{x^i} = \frac{\langle x, \frac{x^i}{\norm{x^i}}\rangle}{\langle \frac{x^i}{\norm{x^i}}, \frac{x^i}{\norm{x^i}}\rangle}.
\end{align*}
\end{lemma}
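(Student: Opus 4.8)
The plan is to verify the claimed identity by a direct computation: I would expand the right-hand side using the bilinearity of the inner product, pull out the scalar factors $\norm{x^i}$, and only then invoke the orthogonality of the decomposition to simplify what remains. Throughout, write $u := x^i / \norm{x^i}$ for the normalized projection, so that the right-hand side reads $\langle x, u\rangle / \langle u, u\rangle$.

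First I would extract the scalar $\norm{x^i}$ from both arguments of each inner product. By bilinearity,
\[
  \langle x, u\rangle = \frac{\langle x, x^i\rangle}{\norm{x^i}}, \qquad \langle u, u\rangle = \frac{\langle x^i, x^i\rangle}{\norm{x^i}^2},
\]
and taking the quotient gives
\[
  \frac{\langle x, u\rangle}{\langle u, u\rangle} = \norm{x^i}\,\frac{\langle x, x^i\rangle}{\langle x^i, x^i\rangle}.
\]
At this point the entire claim reduces to showing that the remaining ratio of Euclidean inner products equals $1$.

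The one substantive step is exactly that simplification, and here I would use the hypothesis that $x$ is decomposed as $x = \sum_{j=0}^{m-1} x^j$ into mutually orthogonal summands. Expanding $\langle x, x^i\rangle = \sum_{j} \langle x^j, x^i\rangle$ and discarding every cross term $\langle x^j, x^i\rangle = 0$ for $j \neq i$ leaves $\langle x, x^i\rangle = \langle x^i, x^i\rangle$. Hence the ratio is $1$ and the right-hand side collapses to $\norm{x^i}$, as claimed.

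I do not expect a genuine obstacle here; the only point requiring a moment of care is that the statement is asserted for an \emph{arbitrary} norm $\norm{\cdot}$ while the inner product $\langle\cdot,\cdot\rangle$ is the fixed Euclidean one. One must therefore resist the temptation to write $\langle u, u\rangle = 1$, which holds only for the Euclidean norm. Instead, the Euclidean quantity $\langle x^i, x^i\rangle$ appears identically in numerator and denominator and cancels, so that precisely one factor of $\norm{x^i}$ survives regardless of which norm is chosen. This cancellation is what makes the formula valid for every norm simultaneously.
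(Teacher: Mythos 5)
Your proof is correct. The paper itself states Lemma~\ref{lm:projection} as a standard linear-algebra fact and gives no proof, so there is nothing to compare against; your direct computation --- pulling the scalar $\norm{x^i}$ out of both inner products and then using the orthogonality of the decomposition to get $\langle x, x^i\rangle = \langle x^i, x^i\rangle$ --- is exactly the verification one would write. You also correctly flag the one subtlety in the statement, namely that $\norm{\cdot}$ is an arbitrary norm while $\langle\cdot,\cdot\rangle$ is the fixed Euclidean inner product, and that the formula survives because the Euclidean quantities cancel and a single factor of $\norm{x^i}$ remains.
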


We also encounter the \emph{self-adjoint operators}.
An operator $A: \reals^n \to \reals^n$ is called \emph{self-adjoint} if for all $x \in \reals^n$ and $y \in \reals^n$ we have $\langle Ax, y \rangle = \langle x, Ay \rangle$, where $\langle \cdot, \cdot \rangle$ stands for the standard inner product.
The operator in this space is self-adjoint if and only if its matrix is symmetric. The most important properties of self-adjoint operators are stated in the Hilbert-Schmidt theorem~\cite{hylbert-schmidt}. We use only one of them.

\begin{lemma}\label{lm:h-sch}
For any self-adjoint operator $A: \reals^n \to \reals^n$ there exists an orthonormal basis of $\reals^n$ that consists of the eigenvectors of $A$.
\end{lemma}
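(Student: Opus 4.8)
The plan is to prove the statement in the finite-dimensional setting $H = \reals^n$ that is actually used in this work, where a self-adjoint operator is exactly one given by a symmetric matrix $A$. (The fully general Hilbert--Schmidt theorem additionally requires compactness of $A$; in finite dimensions this is automatic, so no extra hypothesis is needed.) I would argue by induction on $n = \dim H$, the base case $n = 1$ being trivial, since any non-zero vector is then an eigenvector and can be normalised.

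For the inductive step, the first task is to produce a single eigenvector. The characteristic polynomial $\chi(\lambda) = \det(A - \lambda I)$ has a complex root by the fundamental theorem of algebra, and by \cref{lm:sym-eigenvalues} every such root is in fact real; hence there is a real eigenvalue $\lambda_1$ with a non-zero eigenvector, which I normalise to a unit vector $e^1$.

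The key step is then to pass to the orthogonal complement $W = \{x \in H : \langle x, e^1\rangle = 0\}$ and to show that it is invariant under $A$. For $w \in W$, self-adjointness gives
\begin{align*}
  \langle Aw, e^1\rangle = \langle w, Ae^1\rangle = \lambda_1 \langle w, e^1\rangle = 0,
\end{align*}
so $Aw \in W$; thus $A$ restricts to an operator on $W$, and this restriction is again self-adjoint because the defining identity $\langle Ax, y\rangle = \langle x, Ay\rangle$ continues to hold on the subspace. Since $\dim W = n - 1$, the induction hypothesis yields an orthonormal basis $\{e^2, \dots, e^n\}$ of $W$ consisting of eigenvectors of the restriction of $A$, and these are eigenvectors of $A$ as well. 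As each of $e^2, \dots, e^n$ lies in $W$ and is therefore orthogonal to $e^1$, the collection $\{e^1, e^2, \dots, e^n\}$ is an orthonormal family of $n$ eigenvectors of $A$ in an $n$-dimensional space, hence the desired orthonormal eigenbasis.

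The main obstacle is the existence of the first eigenvector: over $\reals$ a symmetric matrix need not obviously have a real eigenvalue without invoking both the fundamental theorem of algebra (to obtain some complex root) and \cref{lm:sym-eigenvalues} (to force it to be real). Once this is in place, the only remaining subtlety is that the restricted operator genuinely inherits self-adjointness, which is immediate, so the induction runs without further complication. Had I instead tried to build the basis directly from the eigenspaces, I would have needed the separate facts that eigenspaces for distinct eigenvalues are orthogonal, supplied by \cref{lm:sym-eigenvectors}, together with a Gram--Schmidt orthonormalisation inside each eigenspace and a dimension count to verify spanning; the inductive argument packages all of this more economically.
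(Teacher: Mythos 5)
Your argument is correct, but it is worth noting that the paper does not prove this lemma at all: it is quoted as (a special case of) the Hilbert--Schmidt theorem with a citation to a textbook, so there is no in-paper proof to compare against. What you supply is the standard inductive proof of the finite-dimensional spectral theorem, and it is sound: the existence of a first real eigenvalue via the fundamental theorem of algebra combined with Lemma~\ref{lm:sym-eigenvalues}, the invariance of the orthogonal complement under a self-adjoint operator, and the induction on dimension are all handled correctly. (The only micro-step you gloss over is that a \emph{real} eigenvalue of a real symmetric matrix admits a \emph{real} eigenvector, which holds because $A - \lambda_1 I$ is then a singular real matrix and so has nontrivial kernel over $\reals$.) Your restriction to finite dimensions is also the right call: as stated for an arbitrary real Hilbert space the lemma is false without a compactness hypothesis (a self-adjoint operator with purely continuous spectrum has no eigenvectors), and the paper only ever invokes the lemma for the operator induced by the symmetric matrix $P_{\text{ind}}$ on the finite-dimensional subspace $\phi(S) \subset \reals^N$ in Lemma~\ref{th:basis}, so your version fully covers the intended use. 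Compared with the alternative you mention at the end --- assembling the basis from eigenspaces via Lemma~\ref{lm:sym-eigenvectors} and Gram--Schmidt --- the inductive route has the advantage of not needing a separate argument that the eigenspaces together span the whole space, which is exactly the part that is easy to state and annoying to prove directly.
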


\subsection{Absorbing Markov Chains\protect\footnote{In this subsection we use a standard notation for the absorbing Markov chains such as $N$ for the fundamental matrix, $P$ for the transition matrix and $Q$ for the transient-to-transient transition matrix. In the rest of the paper for the reader's convenience we redefine these common and easy-to-remember symbols to denote the objects we work with most frequently.}}\label{sec:leaky_chains}

Markov chains are a widely used tool for the runtime analysis of evolutionary algorithms (see, e.g.,~\cite{Muhlenbein93,Suzuki95,Rudolph96}). In this work we only regard \emph{absorbing} Markov chains. A Markov chain is called absorbing if there is a subset $S'$ of the set of its states $S$ such that 
\begin{enumerate}
  \item [(1)] for every state $s_1 \in S$ there exists a state $s_2 \in S'$ such that there exists a path of transitions with positive probabilities from $s_1$ to $s_2$ (we call $s_2$ an \emph{absorbing} state) and
  \item [(2)] for every absorbing state $s \in S'$ the probability to leave this state is zero.
\end{enumerate}
The non-absorbing states (the states in $S \setminus S'$) are called the \emph{transient states}.

Absorbing chains appear naturally in runtime analysis. When taking as states of the Markov chain the possible states of the algorithm, we can assume the optima to be absorbing.
The runtime of the algorithm is the number of transitions in the chain until it reaches an absorbing state.
We only regard absorbing Markov chains with exactly one absorbing state.

The standard way to compute the expected number of steps until an absorbing state is reached uses the \emph{fundamental matrix}, which is built as follows. Let $P$ be the transition matrix of an absorbing Markov chain, that is, the matrix where each element $p_i^j$ is equal to the transition probability from state $i$ to state $j$. Let $Q$ be the square submatrix of $P$ consisting only of the rows and columns which correspond to transient states of the chain. We call $Q$ the \emph{transient matrix} for brevity. Then the fundamental matrix $N$ of this chain is defined as
\begin{align*}
  N = \sum_{t = 0}^{+\infty}Q^t = (I - Q)^{-1},
\end{align*}
where $I$ is the identity matrix of the same order as $Q$. Let $\pi$ be a stochastic vector which represents the initial distribution over the transient states. Then the expected time until we reach an absorbing state is 
\begin{align*}
  E[T] = \pi N \mathds{1} = \norm{\pi N}_1,
\end{align*}
where $\mathds{1}$ is a column vector of all ones.

However, working with the fundamental matrix is not convenient, since it might be hard to compute its elements precisely. Instead, in this paper we study the properties of the transient matrix $Q$ and compute the expected time until the absorption as
\begin{align}\label{eq:expectation-distribution}
  E[T] = \norm{\sum_{t = 0}^{+\infty} \pi Q^t}_1 = \sum_{t = 0}^{+\infty} \norm{\pi Q^t}_1,
\end{align}
where the last equation is satisfied since all components of vectors $\pi Q^t$ are non-negative. Another way to derive this equation for the expected runtime is to use the formula for the expectation of a non-negative integer-valued random variable, which is,
\begin{align*}
  E[T] = \sum_{t = 0}^{+\infty} \Pr[T \ge t].
\end{align*} 
Note that $\Pr[T \ge t]$ is the probability that we are in a transient state in the start of iteration $t$, which is $\norm{\pi Q^t}_1$.
We show this approach to be much more fruitful, since after finding some properties of the spectrum of $Q$ it allows us to use the decomposition of $\pi$ into the sum of eigenvectors of $Q$ to obtain precise estimates on the runtime.
  
\subsection{Two Markov Chains}\label{sec:model}
For the optimization process of our \oea we first observe that, since the unbiased operator with constant probability flips exactly one bit, the expected time to reach the plateau is $O(n\log n)$. Since the time for leaving the plateau (as shown in this paper) is $\Omega(n^k)$, we only consider the runtime of the algorithm after it has reached the plateau.

For this runtime analysis on the plateau we consider the plateau in two different ways. The first way is to regard a Markov chain that contains $N + 1$ states, where $N = \sum_{i = 0}^{k - 1} \binom{n}{k - i}$. Each state represents one element of the plateau plus there is one absorbing state for the optimum. Note that $N = \frac{n^k}{k!} + o(n^k)$, since $\binom{n}{j} = \frac{n^j}{j!}(1 + o(1))$ for all $j \in [1..k]$.
The transition probability from transient state $x$ to any state $y$ is $q_x^y = \Pr[\alpha = d]\binom{n}{d}^{-1}$, where $d$ is the Hamming distance between $x$ and $y$.
This implies that the transition probability from $x$ to $y$ is equal to the transition probability from $y$ to $x$ for any pair of the transient states. Therefore, the transient matrix is symmetric, which gives us the opportunity to use Lemma~\ref{lm:sym-eigenvalues} and Lemma~\ref{lm:sym-eigenvectors}.
We call this Markov chain the \emph{individual chain}, denote its transient matrix by $Q$ and call the space of real vectors of dimension $N$ the \emph{individual space}\footnote{Note that the dimension of the individual space is equal to the number of transient states of the individual chain, not to the total number of states. Hence, matrix $Q$ defines a linear operator on the individual space.}, since the current state of the chain defines the current individual of the algorithm.

To define the second Markov chain, we first define the \emph{$i$-th level} as the set of all search points that have exactly $n - k + i$ one-bits. Then the plateau is the union of levels $0$ to $k - 1$ and the optimum is the only element of level~$k$. %We consider the optimum as the $k$-th level.
Notice that the $i$-th level contains exactly $\binom{n}{k - i}$ elements (search points). For every $i, j \in [0..k]$ we have that for any element of the $i$-th level the probability to mutate to the $j$-th level is the same due to the unbiasedness of the operator. %Thus, we can unite all the individuals that are on the same level into one state of the Markov chain.
Therefore we can regard a Markov chain of $k + 1$ states, where the $i$-th state ($i \in [0..k]$) represents the elements of the $i$-th level. State $k$ is an absorbing state.
The transition probability from level $i$ to level $j$ is
\begin{align}\label{eq:trans_prob}
  p_i^j = \begin{cases}
    0, \text{ if } i = k, j \ne k, \\
    % 1, \text{ if } i = j = k, \\
    \sum\limits_{m = 0}^{k - j} \binom{k - i}{j - i + m} \binom{n - k + i}{m} \binom{n}{j - i + 2m}^{-1}\Pr[\alpha = j - i + 2m] , \text{ if } j > i, \\
      \sum\limits_{m = 0}^{k - i} \binom{k - i}{m} \binom{n - k + i}{i - j + m} \binom{n}{i - j + 2m}^{-1} \Pr[\alpha = i - j + 2m] ,    \text{ if } j < i \text{ and } i \ne k, \\
      1 - \sum\limits_{m = 0, m \ne i}^k p_i^m, \text{ if } j = i , \\
  \end{cases}
\end{align}

where we assume that $n > 2k$ not to complicate the upper limit of sums. This assumption is justified by that we only consider constant $k$ and we estimate the runtime with $n$ tending to infinity. We notice the following useful property of these probabilities.
\begin{lemma}\label{lem:backward_prob}
For all $i, j \in [0..k - 1]$ we have
\[
\binom{n}{k - i}p_i^j = \binom{n}{k - j}p_j^i.
\]
\end{lemma}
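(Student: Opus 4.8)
The plan is to bypass the explicit binomial sums in \eqref{eq:trans_prob} entirely and instead derive the identity from the symmetry of the individual chain. The point is that $\binom{n}{k-i}\,p_i^j$ is exactly the total probability mass that flows from level $i$ to level $j$ in one step of the individual chain (summed over all starting points of level $i$), and the symmetry $p_{xy}=p_{yx}$ noted in Section~\ref{sec:model} forces this flow to be the same in both directions. Throughout I restrict to $i\neq j$, since for $i=j$ the claimed identity is trivially $\binom{n}{k-i}p_i^i=\binom{n}{k-i}p_i^i$.

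First I would record the bridge between the two chains. Writing $L_i$ for the set of search points on level $i$ (so $|L_i|=\binom{n}{k-i}$) and $p_{xy}=\Pr[\alpha=H(x,y)]\binom{n}{H(x,y)}^{-1}$ for the individual-chain transition probability, I claim that for every fixed $x\in L_i$,
\[
  p_i^j = \sum_{y \in L_j} p_{xy}.
\]
Indeed, for $j\in[0..k-1]$ with $j\neq i$ the mutant $\mutate(x)$ is accepted whenever it lands on the plateau, so moving from $x$ to level $j$ is the disjoint union over $y\in L_j$ of the events $\mutate(x)=y$; summing their probabilities gives the right-hand side. Unbiasedness guarantees that this sum has the same value for every $x\in L_i$, which is precisely what makes $p_i^j$ well defined as a level-chain quantity.

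Next I would multiply this identity by $\binom{n}{k-i}=|L_i|$ and turn the constant into a sum over $x\in L_i$, obtaining
\[
  \binom{n}{k-i}\,p_i^j \;=\; \sum_{x\in L_i}\sum_{y\in L_j} p_{xy}.
\]
The decisive step is then the symmetry $p_{xy}=p_{yx}$, which holds because $p_{xy}$ depends on $x$ and $y$ only through the Hamming distance $H(x,y)$. Interchanging the order of summation and applying the same bridge identity from the vantage point of a fixed $y\in L_j$ yields
\[
  \sum_{x\in L_i}\sum_{y\in L_j} p_{xy}
  \;=\; \sum_{y\in L_j}\sum_{x\in L_i} p_{yx}
  \;=\; \binom{n}{k-j}\,p_j^i,
\]
and combining the two displays gives the lemma.

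I do not expect a genuine obstacle here. The only point needing care is the bookkeeping behind the bridge identity $p_i^j=\sum_{y\in L_j}p_{xy}$, namely that it counts exactly the accepted moves into level $j$ and that its value is independent of the chosen $x\in L_i$; both are immediate consequences of unbiasedness together with the fact that, for $i,j\le k-1$, every relevant point lies on the plateau so no acceptance issues arise. A purely computational alternative would start from \eqref{eq:trans_prob} and verify the identity by reindexing the binomial sums, but the symmetry argument is both shorter and more transparent, since it explains \emph{why} the relation holds rather than merely confirming it.
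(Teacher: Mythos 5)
Your proof is correct, but it takes a genuinely different route from the paper's. The paper proves the identity by direct computation: it establishes the binomial-coefficient identity $\binom{n}{k-i}\binom{k-i}{j-i+m}\binom{n-k+i}{m}=\binom{n}{k-j}\binom{k-j}{m}\binom{n-k+j}{j-i+m}$ by expanding factorials and then sums over $m$ in \eqref{eq:trans_prob}. You instead bypass the sums entirely via double counting on the individual chain: $\binom{n}{k-i}p_i^j=\sum_{x\in L_i}\sum_{y\in L_j}p_{xy}$, which is symmetric in $i$ and $j$ because $p_{xy}=p_{yx}$. Both ingredients you rely on---the bridge identity $p_i^j=\sum_{y\in L_j}p_{xy}$ and the symmetry of $P_{\text{ind}}$---are stated and used elsewhere in the paper (in Section~\ref{sec:model} and in the proof of Lemma~\ref{lm:phi-p}), and neither depends on Lemma~\ref{lem:backward_prob}, so there is no circularity; in effect you have reversed the paper's logical flow, since the paper uses Lemma~\ref{lem:backward_prob} together with the bridge identity and the symmetry to prove Lemma~\ref{lm:phi-p}, whereas you derive the lemma directly from those two facts. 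Your argument is shorter and explains the structural reason the identity holds (it is detailed balance for the uniform-on-the-plateau measure, inherited from the symmetry of the individual chain); the paper's computation is more self-contained and verifies \eqref{eq:trans_prob} against itself without appealing to the individual-space picture. The one bookkeeping point you correctly flag---that for $i,j\le k-1$ and $i\ne j$ every target $y\in L_j$ lies on the plateau and is accepted, so the off-diagonal acceptance issues and the leaky diagonal entries never enter---is exactly the point that needs to be said, and you have said it.
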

\begin{proof}
  Let $L_s$ denote level $s$ for all $s \in [0..k-1]$. Let also $p_{x \to L_s}$ denote the probability to get from individual $x$ to any individual in level $s$. Since for all individuals $x$ in level $i$ the probability $p_{x \to L_j}$ is the same and equal to $p_i^j$ and since there are $\binom{n}{k - i}$ individuals in level $i$, we have
  \begin{align*}
    \binom{n}{k - i}p_i^j = \sum_{x \in L_i} p_{x \to L_j} = \sum_{x \in L_i} \sum_{y \in L_j} q_x^y = \sum_{x \in L_i} \sum_{y \in L_j} q_y^x = \sum_{y \in L_j} p_{y \to L_i} = \binom{n}{k - j}p_j^i.
  \end{align*}
\end{proof}

We observe that the probability to gain $\ell$ levels is $O(n^{-\ell})$.
\begin{lemma}\label{lm:probs}
For all $i \in [0..k-1]$ and $j \in [i + 1..k]$, we have $p_i^j = O(n^{-(j - i)})$.
\end{lemma}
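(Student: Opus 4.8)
The plan is to argue directly from the closed form for $p_i^j$ with $j > i$ given in~\eqref{eq:trans_prob} and to estimate the order of magnitude in $n$ of each factor of a generic summand. Writing $d := j - i \ge 1$ for the number of levels gained, the quantity to bound is
\[
p_i^j = \sum_{m=0}^{k-j} \binom{k-i}{d+m}\binom{n-k+i}{m}\binom{n}{d+2m}^{-1}\Pr[\alpha = d+2m].
\]
The crucial point is that, $k$ being constant, all of $i$, $j$, $d$ and the summation index $m \in [0..k-j]$ are bounded by constants, so each binomial coefficient is governed by its leading power of~$n$.

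First I would record three elementary estimates. Since $k-i$ and $d+m$ are constant, $\binom{k-i}{d+m} = O(1)$; since $m$ is constant, $\binom{n-k+i}{m} = \Theta(n^m)$; and since $d+2m$ is a constant bounded by $2k$, the asymptotic $\binom{n}{c} = \frac{n^c}{c!}(1+o(1))$ already used in the text (valid for any constant exponent $c$) gives $\binom{n}{d+2m}^{-1} = \Theta(n^{-(d+2m)})$. Together with $\Pr[\alpha = d+2m] \le 1$, a generic summand therefore has order
\[
O(1)\cdot\Theta(n^m)\cdot\Theta\!\left(n^{-(d+2m)}\right)\cdot O(1) = O\!\left(n^{-d-m}\right).
\]
Since $m \ge 0$, each summand is at most $O(n^{-d})$, with the dominant contribution coming from $m = 0$, and the sum ranges over only the constantly many values $m \in [0..k-j]$. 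Hence $p_i^j = O(n^{-d}) = O(n^{-(j-i)})$, as claimed.

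I do not expect a genuine obstacle here: the statement is a pure order-of-magnitude computation. The only thing that needs care is to confirm that every exponent and every ``top'' and ``bottom'' argument of the binomial coefficients is genuinely $n$-independent (so that the estimates $\binom{n}{c} = \Theta(n^c)$ and $\binom{n-k+i}{m} = \Theta(n^m)$ apply uniformly), and that the number of summands does not grow with $n$. Both facts follow immediately from the assumption that $k$ is a constant.
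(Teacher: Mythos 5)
Your proof is correct and follows essentially the same route as the paper: bound $\Pr[\alpha = j-i+2m]$ by $1$, estimate each binomial coefficient by its leading power of $n$ (using that $k$ is constant), and sum over the constantly many values of $m$. If anything, your per-summand bound $O(n^{-(j-i)-m})$ is the accurate order, whereas the paper's displayed intermediate bound $O(n^{-(j-i+2m)})$ overlooks the $\Theta(n^m)$ factor from $\binom{n-k+i}{m}$; this does not affect the final conclusion.
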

\begin{proof}
  By Lemma~\ref{lem:backward_prob} and since $p_j^i \le 1$ we have
  \begin{align*}
    p_i^j = \frac{\binom{n}{k - j}p_j^i}{\binom{n}{k - i}} \le \frac{(n - k + i)! (k - i)!}{(n - k + j)!(k - j)!} = O(n^{-(j - i)}).
  \end{align*}
\end{proof}

We call this Markov chain the \emph{level chain} and we call the space of real vectors of length $k$ the \emph{level space}\footnote{As well as for the individual space, the dimension of the level space is equal to the number of the transient states of the level chain and matrix $P$ defines a linear operator on this space.}. The level chain is illustrated in Fig.~\ref{fig:level}.
The transient matrix $P$ of the level chain has a size of $k \times k$.
The matrix $P$ (unlike $Q$) is not symmetric. In our analysis we use the following property of the matrix $P$.
 \begin{lemma}\label{lm:rowsum}
The sum of each row of $P$ is $1 - O(\frac 1n)$.
 \end{lemma}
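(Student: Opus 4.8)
The plan is to prove that each row sum of the level-chain matrix $P$ equals $1 - O(\frac 1n)$. The key observation is that the full (non-leaky) level chain is a genuine Markov chain, so for each non-absorbing level $i \in [0..k-1]$ the probabilities of going to \emph{all} levels sum to one: $\sum_{j=0}^{k} p_i^j = 1$. The matrix $P$ of the leaky chain is obtained by deleting the row and column corresponding to the absorbing level $k$. Hence the $i$-th row sum of $P$ is exactly $\sum_{j=0}^{k-1} p_i^j = 1 - p_i^k$, the total probability mass retained after removing the transition to the optimum. So the whole statement reduces to showing that $p_i^k = O(\frac 1n)$ uniformly over $i \in [0..k-1]$.

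First I would fix an arbitrary level $i \in [0..k-1]$ and write the row sum of $P$ as $1 - p_i^k$ using the fact just noted. The bound on $p_i^k$ is then immediate from \cref{lm:probs}: applying that lemma with $j = k$ gives $p_i^k = O(n^{-(k-i)})$. Since $i \le k-1$, we have $k - i \ge 1$, and therefore $p_i^k = O(n^{-(k-i)}) = O(n^{-1})$. This holds for every $i$ in the relevant range, and because $k$ is treated as a constant there are only finitely many rows, so the implied constants can be taken uniform. Consequently each row sum of $P$ is $1 - O(\frac 1n)$, as claimed.

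The only point that needs a word of care is justifying that $\sum_{j=0}^{k} p_i^j = 1$ for $i \in [0..k-1]$. This is precisely the statement that, starting from a search point on level $i$, the probabilities of mutating to each of the levels $0$ through $k$ form a probability distribution. By the unbiasedness of the mutation operator, every search point on level $i$ has the same probability of reaching level $j$, so the quantities $p_i^j$ defined in \eqref{eq:trans_prob} are well-defined level-to-level transition probabilities of a bona fide Markov chain on $\{0,\dots,k\}$; hence they sum to one. (One could alternatively verify this directly from the closed forms in \eqref{eq:trans_prob}, but invoking the Markov-chain interpretation is cleaner.) I do not anticipate a genuine obstacle here: the proof is short, the substantive content being entirely carried by \cref{lm:probs}. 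The main thing to get right is simply the bookkeeping — identifying the leaky row sum as $1 - p_i^k$ and noting that $p_i^k$ is the $\ell = k - i \ge 1$ case of the already-established $O(n^{-\ell})$ bound.
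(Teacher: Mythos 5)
Your proof is correct and follows essentially the same route as the paper's: identify the $i$-th row sum of the leaky matrix $P$ as $1 - p_i^k$ via the fact that the full level chain's outgoing probabilities sum to one, then apply \cref{lm:probs} with $j = k$ to get $p_i^k = O(n^{-(k-i)}) = O(1/n)$. The extra care you take in justifying $\sum_{j=0}^{k} p_i^j = 1$ and the uniformity over the finitely many rows is fine but not needed beyond what the paper states.
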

\begin{proof}
The sum of the $i$-th row of $P$ is
\begin{align}\label{eq:rowsum}
\sum_{j = 0}^{k - 1} p_i^j = 1 - p_i^k,
\end{align}
since the sum of all the outgoing probabilities for each state in the original Markov chain is one.
By Lemma~\ref{lm:probs} we have $p_i^k = O(n^{-(k - i)}) = O(\frac 1n)$.
\end{proof}

\begin{figure}
  \begin{center}
   \includegraphics[width=0.7\textwidth]{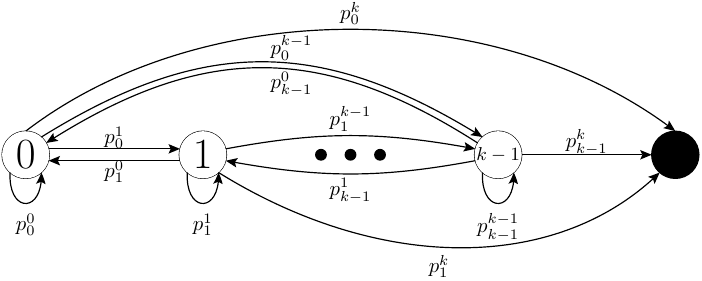}
  \end{center}
  \caption{Illustration of the level chain. The black circle represents the optimum that is an absorbing state. The states $[0..k-1]$ represent the levels of the plateau surrounding the optimum.}
  \label{fig:level}
\end{figure}

There is a natural mapping from the level space to the individual space. Every vector $x = (x_0, \dots, x_{k - 1})$ can be mapped to the vector $\phi(x) = (y_0, \dots, y_{N - 1})$, where $y_i = x_j / \binom{n}{k - j}, $ if the $i$-th element belongs to the $j$-th level.
If $x$ is a distribution over the levels, that is, $x \in [0,1]^k$ and $\|x\|_1 = 1$, then $\phi(x)$ is the distribution over the elements of the plateau which is uniform on the levels and which has the same total mass on each level as~$x$.
This mapping has several useful properties.

\begin{lemma}\label{lm:phi-linear}
$\phi$ is linear, that is, we have $\phi(\alpha x + \beta y) = \alpha \phi(x) + \beta \phi(y)$ for all $x, y \in \reals^k$ and all $\alpha, \beta \in \reals$.
\end{lemma}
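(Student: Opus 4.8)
The plan is to verify linearity coordinate by coordinate, exploiting the fact that $\phi$ is built to act on each output coordinate through exactly one input coordinate. By the definition of $\phi$, the $i$-th output coordinate $y_i$ depends only on $x_j$, where $j$ is the level containing the $i$-th plateau element, and this dependence is merely multiplication by the fixed scalar $1/\binom{n}{k-j}$. Since a map into $\reals^N$ is linear precisely when each of its $N$ coordinate functions is linear, and scalar multiplication of a single coordinate is linear, the claim follows at once.

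Concretely, first I would fix an arbitrary index $i \in [0..N-1]$ and denote by $j = j(i)$ the level to which the $i$-th element belongs. The key point is that this level assignment is determined solely by the index $i$; it does not depend on the input vector. By the definition of $\phi$, the $i$-th coordinate of $\phi(x)$ equals $x_j / \binom{n}{k-j}$. Then, for arbitrary $x, y \in \reals^k$ and $\alpha, \beta \in \reals$, I would examine the $i$-th coordinate of $\phi(\alpha x + \beta y)$: its relevant input entry is the $j$-th entry of $\alpha x + \beta y$, namely $\alpha x_j + \beta y_j$. By distributivity of real multiplication, this coordinate equals $\alpha (x_j/\binom{n}{k-j}) + \beta (y_j/\binom{n}{k-j})$, which is exactly the $i$-th coordinate of $\alpha \phi(x) + \beta \phi(y)$. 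Since $i$ was arbitrary, the two vectors agree in every coordinate, establishing $\phi(\alpha x + \beta y) = \alpha \phi(x) + \beta \phi(y)$.

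There is essentially no obstacle here; the statement is a routine unfolding of the definition. The only point that requires a moment's care is that the level assignment $j(i)$ is a fixed function of the coordinate index $i$, hence the same for all three vectors $x$, $y$, and $\alpha x + \beta y$ simultaneously, which is what makes the coordinate-wise computation valid. Alternatively, the entire argument can be compressed to a single observation: $\phi$ is given by left-multiplication by the fixed $N \times k$ matrix whose $(i,j)$ entry is $1/\binom{n}{k-j}$ when the $i$-th element lies on level $j$ and $0$ otherwise, so linearity is inherited immediately from linearity of matrix-vector multiplication.
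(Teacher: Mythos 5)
Your proof is correct and matches the paper's approach: the paper simply states that linearity follows directly from the definition of $\phi$, and your coordinate-wise verification (equivalently, the observation that $\phi$ is multiplication by a fixed $N \times k$ matrix) is exactly the routine unfolding the paper leaves implicit.
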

This property follows directly from the definition of $\phi$.

\begin{lemma}\label{lm:phi-p}
  For all $x \in \reals^k$ we have $\phi(x P) = \phi(x) Q$.
\end{lemma}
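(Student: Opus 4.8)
The plan is to exploit the linearity of both sides and then reduce the identity to the detailed-balance relation of Lemma~\ref{lem:backward_prob}. Since $\phi$ is linear (Lemma~\ref{lm:phi-linear}), both maps $x \mapsto \phi(xP)$ and $x \mapsto \phi(x)P_{\text{ind}}$ are linear in $x$, so it suffices to verify the identity on the standard basis of the level space. I write $u^i$ ($i \in [0..k-1]$) for the unit vector having a one in coordinate $i$ and zeros elsewhere, and I index the individual space by the search points of the plateau, so that the component of a vector at a point $y$ lying in level $j$ is what I compare.

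For the left-hand side, $u^i P$ is simply the $i$-th row $(p_i^0, \dots, p_i^{k-1})$ of $P$. By the definition of $\phi$, which spreads the mass $p_i^j$ of level $j$ uniformly over its $\binom{n}{k-j}$ points, the component of $\phi(u^i P)$ at any search point $y$ in level $j$ equals $p_i^j / \binom{n}{k-j}$.

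For the right-hand side, $\phi(u^i)$ carries mass $1/\binom{n}{k-i}$ on each of the $\binom{n}{k-i}$ points of level $i$ and zero elsewhere. Writing $p_{xy}$ for the individual-chain transition probability, the component of $\phi(u^i)P_{\text{ind}}$ at a point $y$ in level $j$ is therefore
\[
  \frac{1}{\binom{n}{k-i}}\sum_{x \in \text{level } i} p_{xy}.
\]
Here I use the key structural fact that $P_{\text{ind}}$ is symmetric, $p_{xy}=p_{yx}$, to rewrite the incoming mass as $\sum_{x \in \text{level } i} p_{yx}$, which is exactly the total probability that one step from the fixed point $y$ in level $j$ lands somewhere in level $i$; by the aggregation defining the level chain this sum is precisely $p_j^i$. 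Hence the right-hand component equals $p_j^i / \binom{n}{k-i}$.

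It then remains only to check $p_i^j/\binom{n}{k-j} = p_j^i/\binom{n}{k-i}$ at each point, which is a direct restatement of the identity $\binom{n}{k-i}p_i^j = \binom{n}{k-j}p_j^i$ of Lemma~\ref{lem:backward_prob}. Since the two sides then agree componentwise for every basis vector $u^i$, linearity extends the equality to all $x \in \reals^k$. The only step requiring care is the bookkeeping that converts the incoming mass $\sum_{x \in \text{level } i} p_{xy}$ into the outgoing level-probability $p_j^i$ via the symmetry of $P_{\text{ind}}$; once this is in place, the detailed-balance relation does the rest and everything else is routine substitution into the definition of $\phi$.
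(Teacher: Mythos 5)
Your proof is correct and follows essentially the same route as the paper's: both arguments rest on the aggregation identity $\sum_{x \in \text{level } i} p_{yx} = p_j^i$ for $y$ in level $j$, the symmetry of $P_{\text{ind}}$, and the relation $\binom{n}{k-i}p_i^j = \binom{n}{k-j}p_j^i$ of Lemma~\ref{lem:backward_prob}. The only cosmetic difference is that you reduce to the standard basis vectors via linearity first, whereas the paper carries a general $x$ through the same componentwise computation.
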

\begin{proof}
  In informal words, this property holds because both matrices $P$ and $Q$ represent the same operator, but in different spaces. Thus, the result of applying this operator to some vector and then switching the space is the same as performing these two actions in a reversed order.

  For the formal proof, recall that \emph{level $i$} is the set of all individuals in distance $(k - i)$ from the optimum. We use the fact that for any individual $m$ in level $j$ we have
  \[
  p_j^i = \sum_{\ell \in \text{level }i} q_m^\ell,
  \]
  where $q_m^\ell$ is the element of matrix $Q$, that is, the probability to obtain individual $\ell$ from individual $m$. From this and from the definition of $\phi$ we calculate the $m$-th element of $\phi(xP)$, assuming that individual $m$ belongs to level $j$.
  \begin{align*}
    (\phi(xP))_m = \frac{(xP)_j}{\binom{n}{k - j}} = \frac{\sum_{i = 0}^{k - 1} x_i p_i^j}{\binom{n}{k - j}}.
  \end{align*}
  By Lemma~\ref{lem:backward_prob} we have $\binom{n}{k - i}p_i^j = \binom{n}{k - j}p_j^i$. Therefore,
  \begin{align*}
    (\phi(xP))_m = \sum_{i = 0}^{k - 1} \frac{x_i}{\binom{n}{k - i}} p_j^i = \sum_{i = 0}^{k - 1} \frac{x_i}{\binom{n}{k - i}} \sum_{\ell \in \text{level }i} q_m^\ell.
  \end{align*}
  Recall that $q_m^\ell = q_\ell^m$ for all $\ell, m \in [0..N - 1]$. Hence, we have
  \begin{align*}
    (\phi(xP))_m = \sum_{i = 0}^{k - 1} \frac{x_i}{\binom{n}{k - i}} \sum_{\ell \in \text{level }i} q_\ell^m = \sum_{\ell = 0}^{N - 1} (\phi(x))_\ell \cdot q_\ell^m = (\phi(x) Q)_m.
  \end{align*}
\end{proof}

\begin{lemma}\label{th:embedded_spectrum}
  The spectrum $\sigma(P)$ of the matrix $P$ is a subset of the spectrum $\sigma(Q)$ of the matrix $Q$. For any eigenvector $x$ of the matrix $P$ the vector $\phi(x)$ is an eigenvector of $Q$. 
\end{lemma}
\begin{proof}
From Lemma~\ref{lm:phi-linear} and Lemma~\ref{lm:phi-p} it follows that if $x$ is an eigenvector of $P$, then $\phi(x)$ is an eigenvector of $Q$ with the same eigenvalue. Thus, every eigenvalue of $P$ is an eigenvalue of $Q$.
\end{proof}

\begin{lemma}\label{lm:phi-manhattan}
For all $x \in \reals^k$, the Manhattan norm is invariant under $\phi$, that is, $\norm{x}_1 = \norm{\phi(x)}_1$.
\end{lemma}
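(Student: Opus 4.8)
The plan is to prove the identity by a direct computation, simply unfolding the definition of $\phi$ and the definition of the Manhattan norm, and then regrouping the sum over individuals according to the level structure. Since $\phi$ is constant on each level and the levels partition the individual space, the sum collapses cleanly.

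First I would write out the Manhattan norm of $\phi(x)$ as the sum of absolute values of its $N$ components, $\norm{\phi(x)}_1 = \sum_{i=0}^{N-1} |(\phi(x))_i|$. Then I would partition the index set $[0..N-1]$ according to the level to which each individual belongs. By the definition of $\phi$, every individual in level $j$ has the same coordinate value, namely $x_j / \binom{n}{k-j}$, so within the block corresponding to level $j$ all absolute values equal $|x_j| / \binom{n}{k-j}$ (here one only needs that $\binom{n}{k-j} > 0$, so that the factor can be pulled out of the absolute value).

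The decisive step is the counting fact, already recorded in the text, that level $j$ contains exactly $\binom{n}{k-j}$ individuals. Multiplying the common per-individual contribution $|x_j| / \binom{n}{k-j}$ by the number $\binom{n}{k-j}$ of individuals in the level, the binomial coefficients cancel and the contribution of level $j$ is exactly $|x_j|$. Summing over $j \in [0..k-1]$ then gives $\sum_{j=0}^{k-1} |x_j| = \norm{x}_1$, which is the claimed equality.

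There is essentially no real obstacle here: the statement follows by bookkeeping, and the only point requiring any care is to make sure the absolute value is handled correctly when extracting the positive normalizing factor $1/\binom{n}{k-j}$, and to invoke the exact level sizes rather than their asymptotic approximations (so that the cancellation is exact and not merely up to $1 \pm o(1)$).
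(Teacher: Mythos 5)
Your proof is correct and is exactly the computation the paper summarizes in its one-sentence justification (that all components of $\phi(x)$ coming from the same level have the same value, hence the same sign, so the per-level absolute values sum to $|x_j|$ after the binomial coefficients cancel). No difference in approach, only in the level of detail.
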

This follows from the fact that all components of $\phi(x)$ that are from the same level have the same sign.
Notice that an analogous property does not hold for the Euclidean norm $\norm{\cdot}_2$.

Although the two Markov chains represent the same process and each of them contains all information about it, in our analysis we need to use both of them simultaneously. We do not really work with the whole individual space, but only with its subspace $\phi(S)$, where $S$ is the level space. Hence, it is natural to use the terms of the level space to simplify the computations and make them easier to understand. On the other hand, we cannot prove some essential facts about the operator $\mathcal{P}$ represented by the matrix $P$, e.g., that there exists a basis of the level space which consists of the eigenvectors of $\mathcal{P}$ (see Lemma~\ref{th:basis}). To prove them we have to switch to the individual space (or more precisely, to its subspace $\phi(S)$) and use the properties of the self-adjoint operator $\mathcal{Q}$ represented by the symmetric matrix $Q$. Therefore, both chains and their transient matrices are indispensable in our analysis.

\subsection{The Spectrum of the Transient Matrix}\label{spectrum}

The main result of this section is the following analysis of the eigenvalues of~$P$, which builds on the interplay between the two Markov chains.

\begin{lemma}\label{lm:spectrum}
  Let $P$ be the transient matrix of the level chain. Then the following three properties hold.
  \begin{enumerate}
    \item All eigenvalues of $P$ are real.
    \item The largest eigenvalue $\lambda_0$ of $P$ satisfies $\lambda_0 = 1 - O(1/n)$.
    \item Let $\Pr[\alpha = 1] > 0$ and $\Pr[\alpha = 1] = \omega(1/\sqrt[k - 1]{n})$. Then with $c \coloneqq \Pr[\alpha = 1]$ and with $\varepsilon \coloneqq \frac{c^{k - 1}}{(k - 1)2^k}$ any other eigenvalue $\lambda' \ne \lambda_0$ of $P$ satisfies $|\lambda'| < 1 - \varepsilon$.
  \end{enumerate}
\end{lemma}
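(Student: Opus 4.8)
The plan is to treat the three assertions in order, with the first two short and the spectral gap carrying all the weight. For the first assertion I would argue purely through the embedding: by \cref{th:embedded_spectrum} the spectrum of $P$ is a subset of that of $P_{\text{ind}}$, and $P_{\text{ind}}$ is symmetric because the probability to pass between two plateau points depends only on their Hamming distance, so $p_{xy}=p_{yx}$. Hence \cref{lm:sym-eigenvalues} makes every eigenvalue of $P_{\text{ind}}$, and therefore every eigenvalue of $P$, real. For the second assertion I would first note that $P$ is irreducible: since $\Pr[\alpha=1]>0$, one accepted single-bit flip connects level $i$ to level $i-1$ and, for $i\le k-1$, to level $i+1$, so every state of the leaky level chain reaches every other. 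The Perron--Frobenius theorem (\cref{lm:p-f}) is then available; its first item places $\lambda_0$ between the minimal and maximal row sum, which by \cref{lm:rowsum} both equal $1-O(1/n)$, giving $\lambda_0=1-O(1/n)$, while its remaining items give that $\lambda_0$ is simple with a strictly positive left eigenvector $v_0$.

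For the third assertion I would set up an anchor expressing that $v_0$ concentrates on level $0$. By reversibility (\cref{lem:backward_prob}), the vector $w_i:=(v_0)_i/\binom{n}{k-i}$ satisfies $Pw=\lambda_0 w$ and is positive (since $v_0>0$), so it is the positive right Perron eigenvector; normalising $\max_i w_i=w_{i_0}=1$ and reading off row $i_0$ gives a total defect $\sum_j p_{i_0}^j(1-w_j)=\sum_j p_{i_0}^j-\lambda_0=O(1/n)$ (using \cref{lm:rowsum} and the leak bound of \cref{lm:probs}). Propagating this defect downward through the levels, each step dividing by $p_i^{i-1}\ge\frac{n-k+i}{n}\Pr[\alpha=1]\ge c/2$, yields $w_0=1-O\big(2^{k}/(nc^{k-1})\big)$, which is $1-o(1)$ \emph{precisely because} $c=\omega(n^{-1/(k-1)})$, i.e.\ $nc^{k-1}\to\infty$. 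Hence $w_0=\Theta(1)$ and $(v_0)_i/(v_0)_0=O(n^{-i})$ for $i\ge 1$.

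The gap argument itself is then a Wielandt-type maximum-component estimate. Take any eigenvalue $\lambda\ne\lambda_0$ with eigenvector $u$; since $u$ is orthogonal to the positive $v_0$ one gets $|u_0|\le\sum_{i\ge 1}\frac{(v_0)_i}{(v_0)_0}|u_i|=O(1/n)$. Normalising $\max_i|u_i|=|u_{i^*}|=1$ forces $i^*\ge 1$, and reading off row $\ell$ of $Pu=\lambda u$ while bounding every $|u_j|\le 1$ except the downward neighbour gives, for the deficits $\delta_m:=1-|u_{i^*-m}|$ and assuming $|\lambda|\ge 1-\varepsilon$, the recursion $p_\ell^{\ell-1}\delta_m\le\delta_{m-1}+\varepsilon$, hence $\delta_m\le\frac{2}{c}(\delta_{m-1}+\varepsilon)$. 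Iterated from $\delta_0=0$ down to level $0$ (at most $k-1$ steps) this yields $1-|u_0|=O\big(\varepsilon(2/c)^{k-1}\big)$, which together with $|u_0|=O(1/n)$ forces $\varepsilon=\Omega(c^{k-1}/2^{k})$; contrapositively the stated $\varepsilon=\frac{c^{k-1}}{(k-1)2^k}$ cannot be reached, so $|\lambda|<1-\varepsilon$. Working throughout with $|u_j|$ bounds negative (indeed complex) eigenvalues in modulus simultaneously, so no separate argument is needed to exclude eigenvalues near $-1$.

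The main obstacle, I expect, is not the recursion but the two downward propagations and the attendant error bookkeeping. Establishing $(v_0)_i/(v_0)_0=O(n^{-i})$ rigorously means turning ``$w$ is almost constant'' into explicit bounds on $w_0$, controlling the leak and upward probabilities $p_i^k,p_i^j=O(1/n)$ ($j>i$) from \cref{lm:probs} and the lower-order corrections to $p_\ell^{\ell-1}$, and checking they stay negligible against the surviving $c^{k-1}$ when multiplied by the $(2/c)^{k-1}$ blow-up; this is exactly where the hypothesis $c=\omega(n^{-1/(k-1)})$ is consumed, both to keep the anchor $w_0=1-o(1)$ and to make the resulting $\varepsilon=\omega(1/n)$ operationally meaningful. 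As a more computational fallback yielding a stronger (order-$c$) gap, I would symmetrise $P$ through the diagonal similarity $D^{1/2}PD^{-1/2}$ with $D=\operatorname{diag}\big(\binom{n}{k-i}\big)$, legitimate by \cref{lem:backward_prob}, and bound the second eigenvalue by a Dirichlet-form Rayleigh quotient over vectors orthogonal to $v_0$.
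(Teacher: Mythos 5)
Your parts (1) and (2) coincide with the paper's proof (embedding into the symmetric individual chain, then Perron--Frobenius plus the row-sum bound of \cref{lm:rowsum}). For part (3), however, you take a genuinely different route. The paper works directly with the characteristic polynomial: it writes $\chi_P(\lambda)=\prod_{i}(p_i^i-\lambda)+\beta(\lambda)$ with all coefficients of $\beta$ of order $O(1/n)$, shows that $\chi_P'$ has constant sign and modulus $\omega(1/n)$ on $[1-\varepsilon,1]$ (so at most one root lies there), and then rules out roots below $-\tfrac12$ by a separate evaluation of $\chi_P$. You instead run a biorthogonality/Perron-vector argument: reversibility (\cref{lem:backward_prob}) turns the left Perron vector into a right one, a downward propagation of the eigenvalue equation shows the right Perron vector is nearly constant (hence the left one puts mass $1-O(1/n)$ on level $0$), any eigenvector for $\lambda\ne\lambda_0$ is biorthogonal to the left Perron vector and so has level-$0$ component $O(1/n)$, while the same downward propagation under $|\lambda|\ge 1-\varepsilon$ forces that component to be at least $\tfrac12$ with the stated $\varepsilon$ --- a contradiction. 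I checked the arithmetic: the recursion $\delta_{\ell-1}\le(2/c)(\varepsilon+\delta_\ell)$ with $\delta_{i^*}=0$ gives $\delta_0\le\varepsilon(k-1)(2/c)^{k-1}=\tfrac12$ for $\varepsilon=\frac{c^{k-1}}{(k-1)2^k}$, and the hypothesis $c=\omega(n^{-1/(k-1)})$ is consumed exactly where you say, to make $nc^{k-1}\to\infty$. Your approach buys a uniform treatment of positive, negative and complex eigenvalues (the paper needs a separate step near $-1$), and it establishes the concentration of the conditional stationary distribution on level $0$ as a by-product --- essentially the content of \cref{th:uniform}, which the paper only derives afterwards from the spectral gap; there is no circularity in your version since the concentration comes from $\lambda_0=1-O(1/n)$ and the drift toward level $0$, not from the gap. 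The paper's polynomial computation is more self-contained within the $k\times k$ level chain and avoids the two layers of error bookkeeping you correctly identify as the delicate part. Your fallback via the diagonal similarity $D^{1/2}PD^{-1/2}$ is, in substance, the paper's own symmetrization through the map $\phi$ into the individual space, so it would not be an independent alternative.
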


\begin{proof}

The fact that the eigenvalues are real follows from the facts that by Lemma~\ref{th:embedded_spectrum} the spectrum of $P$ is a subset of the spectrum of $Q$ and that by Lemma~\ref{lm:sym-eigenvalues} all eigenvalues of the symmetric matrix $Q$ are real.

The largest eigenvalue $\lambda_0$ of $P$ is bounded by the minimal and the maximal row sum of $P$ (see Theorem~\ref{lm:p-f}), which are both $1 - O(1/n)$ by Lemma~\ref{lm:rowsum}. 

It remains to show that the absolute values of all other eigenvalues are less than $1 - \varepsilon$ for $\varepsilon = \frac{c^{k - 1}}{(k - 1)2^k}$, which requires more work. To prove this statement we perform a precise analysis of the characteristic polynomial of $P$.

Recall that the spectrum of $P$ is the set of the roots of its characteristic polynomial
\begin{align*}
  \chi_P(\lambda) = \det (P - \lambda I) = \sum\limits_{\sigma \in S_k} \text{sgn}(\sigma) \prod\limits_{i = 0}^{k - 1}(P - \lambda I)_{i, \sigma(i)},
\end{align*}
where $S_k$ is the set of all permutations of the set $[0..k - 1]$ and $\text{sgn}(\sigma)$ denotes the \emph{signature} of permutation $\sigma$ (that is, $+1$ if it can be obtained from the identity permutation in even number of element swaps, and $-1$ otherwise). Note that for all permutations except the identity the product in the sum contains at least one factor $(P - \lambda I)_{i, j}$ with $j > i$ and this element satisfies $(P - \lambda I)_{i, j} = p_i^j = O(1/n)$ by Lemma~\ref{lm:probs}.
The other factors of the product are either $p_{i'}^{j'}$ or $(p_{i'}^{i'} - \lambda)$ for some $i', j',$ therefore every product where $\sigma$ is not the identity is a polynomial in $\lambda$ with coefficients which are $O(1/n)$. Thus, the characteristic polynomial can be written as
\begin{align}
  \chi_P(\lambda) = \prod\limits_{i = 0}^{k - 1} (p_i^i - \lambda) + \beta(\lambda), \label{eq:chi}
\end{align}
where $\beta(\lambda)$ is some polynomial in $\lambda$ with coefficients that are all $O(1/n).$ For this reason the derivative $\beta'(\lambda)$ will also be $O(1/n)$ for all $\lambda \in [-1, 1]$, where we recall that all asymptotics are for $n \to \infty$ (and, e.g., not for any limit behavior of $\lambda$).

To prove that for all eigenvalues $\lambda' \ne \lambda_0$ we have $\lambda' < 1 - \varepsilon$ we need to prove that there is no more than one root of the characteristic polynomial in $[1 - \varepsilon, 1]$.
To do so it suffices to prove that $\chi_P(\lambda)$ is strictly monotonic in this segment.

Consider  $\lambda \ge 1 - \varepsilon$. This implies that $\lambda \ge 1 - \frac{c}{2}$. For every $i \ne 0$ we have $p_i^i \le 1 - \Pr[\alpha = 1] = 1 - c.$ Thus, for every $i \ne 0$ and any $\lambda  \ge 1 - \varepsilon$ we have
\begin{align}\label{eq:chi_roots}
  (p_i^i - \lambda) \le -c/2.
\end{align}

By~\eqref{eq:chi}, the derivative of $\chi_P(\lambda)$ can be written as
\begin{align}\label{eq:chi_derivative}
  \chi_P'(\lambda) = (p_0^0 - \lambda) \left(\prod\limits_{i = 1}^{k - 1} (p_i^i - \lambda)\right)'  - \prod\limits_{i = 1}^{k - 1} (p_i^i - \lambda) + \beta'(\lambda).
\end{align}

Recall that $\varepsilon = \frac{c^{k - 1}}{(k - 1)2^k}$. For all $\lambda \ge 1 - \varepsilon$ we have
\begin{align*}
  p_0^0 - \lambda &\le 1 - \left(1 - \frac{c^{k - 1}}{(k - 1)2^k}\right) = \frac{c^{k - 1}}{(k - 1)2^k}, \\
  \left|\left(\prod\limits_{i = 1}^{k - 1} (p_i^i - \lambda)\right)'\right| &= \left|-\sum_{i = 1}^{k - 1} \prod_{\substack{j \in [1..k-1] \\ j \ne i}} (p_j^j - \lambda)\right| \le (k - 1), \\
  \left|\prod\limits_{i = 1}^{k - 1} (p_i^i - \lambda)\right| &\ge \frac{c^{k - 1}}{2^{k - 1}},
\end{align*}
where the last inequality follows from~\eqref{eq:chi_roots}. Furthermore, from~\eqref{eq:chi_roots} it also follows that for $i \ne 0$ we have $(p_i^i - \lambda) < 0$. Thus,
 \begin{align}\label{eq:signum}
  \sign\left(\left(\prod\limits_{i = 1}^{k - 1} (p_i^i - \lambda)\right)'\right) &= \sign\left(- \sum_{i = 1}^{k - 1}\prod\limits_{\substack{j \in [1..k-1] \\ j \ne i}} (p_j^j - \lambda)\right) = (-1)^{k - 1}.
\end{align}

Consequently, we have two cases.

\underline{Case 1:} When $p_0^0 - \lambda \ge 0$, we have
\begin{align*}
 \bigg| (p_0^0 - \lambda) &\left(\prod\limits_{i = 1}^{k - 1} (p_i^i - \lambda)\right)' -\prod\limits_{i = 1}^{k - 1} (p_i^i - \lambda) \bigg| \\
  &\ge \left|\prod\limits_{i = 1}^{k - 1} (p_i^i - \lambda)\right| - \left| (p_0^0 - \lambda) \left(\prod\limits_{i = 1}^{k - 1} (p_i^i - \lambda)\right)'\right| \\
  &\ge \left|\frac{c}{2}\right|^{k - 1} - \frac{c^{k - 1}}{(k - 1)2^k} \sum_{i = 1}^{k - 1}\left|\prod\limits_{\substack{j \in [1..k-1] \\ j \ne i}} (p_j^j - \lambda)\right| \\
  &\ge \frac{c^{k - 1}}{2^{k - 1}} - (k - 1) \frac{c^{k - 1}}{(k - 1)2^k} = \frac{c^{k - 1}}{2^k}.
\end{align*}

Hence, we have
\begin{align*}
  \left| \chi_P'(\lambda) \right|= \frac{c^{k - 1}}{2^k} + O(1/n) = \frac{\omega(1/n^\frac{1}{k - 1})^{k - 1}}{2^k} + O(1/n) = \omega(1/n).
\end{align*}

\underline{Case 2:} When $p_0^0 - \lambda < 0$, since by~\eqref{eq:chi_roots} $(p_i^i - \lambda) < 0$ for all $i \ne 0$, we have
\begin{align*}
  \left| (p_0^0 - \lambda) \left(\prod\limits_{i = 1}^{k - 1} (p_i^i - \lambda)\right)'  - \prod\limits_{i = 1}^{k - 1} (p_i^i - \lambda) \right| &\ge \left|\prod\limits_{i = 1}^{k - 1} (p_i^i - \lambda)\right| \ge \frac{c^{k - 1}}{2^{k - 1}}.
\end{align*}

Therefore,
\begin{align*}
  \left| \chi_P'(\lambda) \right| = \omega(1/n).
\end{align*}

For $n$ large enough, this together with~\eqref{eq:chi_derivative} and~\eqref{eq:signum} implies that $\chi_P'(\lambda)$ has the same sign as $(-1)^{k - 1}$ for every $\lambda \in [1 - \varepsilon, 1]$. Thus, there can be only one root of characteristic polynomial in this segment.

To rule out that there is a negative eigenvalue $\lambda$ with $|\lambda| > 1 - \varepsilon$, we notice that for $\lambda < -\frac{1}{2}$ and for every $i$ we have $(p_i^i - \lambda) > \frac{1}{2}$.
Therefore, $|\chi_P(\lambda)| > \left(\frac{1}{2}\right)^k - o(1),$ and thus there are no roots that are less than $-\frac{1}{2}$ when $n$ is large enough.

This finally shows that for all eigenvalues $\lambda' \ne \lambda_0$ we have $|\lambda'| < 1 - \min\left(\frac{1}{2}, \frac{c^{k - 1}}{(k - 1)2^k} \right) = 1 - \frac{c^{k - 1}}{(k - 1)2^k}$.

\end{proof}

\section{Runtime Analysis}\label{runtime}

In this section, we prove our main result, which determines the runtime of the \oea on the \plateau function.

\begin{theorem}\label{th:runtime}
  Consider the \oea using any unbiased mutation operator such that the probability to flip exactly one bit is at least $\Pr[\alpha = 1] = \omega\left(n^{-\frac{1}{2k - 2}}\right)$. Let $T$ denote the runtime of this algorithm starting on an arbitrary search point of the plateau of the $\plateau_k$ function. Then
  \begin{align*}
  E[T] &= \frac{n^k}{\Pr[1 \le \alpha \le k] k!}(1 + o(1)),\\
  \Pr[T > t] &= (1 \pm o(1))\left(1 - \frac{k!\Pr[1 \le \alpha \le k]}{n^k}(1 \pm o(1)) \right)^t + r(t),
  \end{align*}
  where $|r(t)| \le \sqrt{N}(1 - \varepsilon)^t$, $\varepsilon = \frac{\left(\Pr[\alpha = 1]\right)^{k - 1}}{(k - 1)2^k}$, and $N = \frac{n^k}{k!}(1 \pm o(1))$. All asymptotic notation refers to $n \to \infty$ and is independent of $t$.
\end{theorem}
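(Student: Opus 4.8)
The plan is to reduce the whole computation to a spectral analysis of the level chain, but to perform the eigenvector decomposition in the \emph{individual} space, where the transition matrix $P_{\text{ind}}$ is symmetric and therefore diagonalizable in an orthonormal basis. First I would use the symmetry of the problem: since any two search points of the same level are exchanged by a coordinate permutation that fixes the optimum and commutes with the mutation operator, the distribution of $T$ depends only on the starting level. Hence I may take the initial distribution to be a probability vector $\rho$ in level space, so that $\Pr[T>t]=\norm{\rho P^t}_1$, and by Lemma~\ref{lm:phi-p} and Lemma~\ref{lm:phi-manhattan} this equals $\norm{\phi(\rho)P_{\text{ind}}^t}_1$. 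The image of $\phi$ is a $k$-dimensional $P_{\text{ind}}$-invariant subspace $V$ (invariance is Lemma~\ref{lm:phi-p}), and since $P_{\text{ind}}$ is self-adjoint, Lemma~\ref{lm:h-sch} furnishes an orthonormal basis $\hat e^0,\dots,\hat e^{k-1}$ of $V$ consisting of eigenvectors $\hat e^i=\phi(v^i)/\norm{\phi(v^i)}_2$, whose eigenvalues are exactly the eigenvalues $\lambda_0,\dots,\lambda_{k-1}$ of $P$ (Lemma~\ref{th:embedded_spectrum}).

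Next I would split the dynamics into a dominant term and a tail. Writing $\phi(\rho)=\sum_{i=0}^{k-1}c_i\hat e^i$ gives $\phi(\rho)P_{\text{ind}}^t=\sum_i c_i\lambda_i^t\hat e^i$. I separate the Perron contribution $i=0$, noting that $\hat e^0$ has all components of one sign by the last item of Theorem~\ref{lm:p-f} (transported through $\phi$), so $\norm{c_0\lambda_0^t\hat e^0}_1=c_0\norm{\hat e^0}_1\lambda_0^t$ with $c_0>0$. For the remainder $w_t=\sum_{i\ge1}c_i\lambda_i^t\hat e^i$, orthonormality gives $\norm{w_t}_2^2=\sum_{i\ge1}c_i^2\lambda_i^{2t}\le(1-\varepsilon)^{2t}\norm{\phi(\rho)}_2^2$ by part~3 of Lemma~\ref{lm:spectrum}, and since $\norm{\phi(\rho)}_2\le\norm{\phi(\rho)}_1=1$, Lemma~\ref{lm:norm-manhattan-euclidean} yields $\norm{w_t}_1\le\sqrt N(1-\varepsilon)^t$. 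The triangle inequality (Lemma~\ref{lm:triangle}) then gives $\Pr[T>t]=c_0\norm{\hat e^0}_1\lambda_0^t+r(t)$ with $|r(t)|\le\sqrt N(1-\varepsilon)^t$, which is precisely the claimed form once the two constants $c_0\norm{\hat e^0}_1$ and $\lambda_0$ are identified.

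To identify the constants, the clean route goes through reversibility: by Lemma~\ref{lem:backward_prob} the level chain is reversible with respect to $\mu_i=\binom{n}{k-i}$, i.e.\ $\mu_i p_i^j=\mu_j p_j^i$. Since $v^0\ge 0$ and $v^0P=\lambda_0v^0$, the quantity $1-\lambda_0$ equals the per-step leak $\sum_i (v^0)_i\,p_i^k$, where $p_i^k=\Pr[\alpha=k-i]/\binom{n}{k-i}$ from \eqref{eq:trans_prob}. Similarly, using $\hat e^0=\phi(v^0)/\norm{\phi(v^0)}_2$ and Lemma~\ref{lm:phi-manhattan} (for the projection one may invoke Lemma~\ref{lm:projection}), I would write $c_0\norm{\hat e^0}_1=\langle\phi(\rho),\phi(v^0)\rangle\,\norm{v^0}_1/\norm{\phi(v^0)}_2^2$. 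The structural observation is that if $v^0\propto\mu$, then $\phi(v^0)$ is \emph{exactly uniform} over the plateau, so the factors $\binom{n}{k-i}^{-1}$ in $p_i^k$ cancel against $(v^0)_i\propto\binom{n}{k-i}$ and the leak telescopes to $\frac{\Pr[1\le\alpha\le k]}{\binom{n}{k}}=\frac{k!\Pr[1\le\alpha\le k]}{n^k}(1\pm o(1))$, while a short computation gives $c_0\norm{\hat e^0}_1=1$ exactly.

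The main obstacle, and the one step I expect to require real work, is justifying that $v^0$ is proportional to $\mu$ up to a $(1\pm o(1))$ factor. Equivalently, by reversibility the vector $u_i:=(v^0)_i/\mu_i$ is a right eigenvector of $P$, so the task becomes showing that the right Perron eigenvector is nearly constant. This is plausible because $P$ has row sums $1-p_j^k=1-O(1/n)$ (Lemma~\ref{lm:rowsum}) and the chain drifts to level $0$ in $O(1)$ expected steps, making $u$ nearly harmonic; but quantifying the deviation as $o(1)$ is exactly where the hypothesis $\Pr[\alpha=1]=\omega(n^{-1/(2k-2)})$ enters, controlling the ratio of the perturbation scale $\max_j p_j^k=\Theta(\Pr[\alpha=1]/n)$ to the spectral gap $\varepsilon=\Theta(\Pr[\alpha=1]^{k-1})$. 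Granting this near-proportionality, the two constants are as claimed, which gives the stated $\Pr[T>t]$. Finally, summing over $t$ via \eqref{eq:expectation-distribution} yields $E[T]=(1\pm o(1))/(1-\lambda_0)+O(\sqrt N/\varepsilon)$, and the error is negligible since $\sqrt N/\varepsilon=o(n^{(k+1)/2})=o(n^k)$ for $k\ge 2$, producing $E[T]=\frac{n^k}{k!\Pr[1\le\alpha\le k]}(1\pm o(1))$.
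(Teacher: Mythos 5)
Your overall strategy coincides with the paper's: you pass to the individual chain, where $P_{\text{ind}}$ is symmetric, diagonalize it in an orthonormal eigenbasis of the $k$-dimensional invariant subspace $\phi(\reals^k)$, isolate the Perron term, bound the remaining modes by $\sqrt{N}(1-\varepsilon)^t$ via Lemmas~\ref{lm:norm-manhattan-euclidean}, \ref{lm:euclidean-norm-1} and~\ref{lm:spectrum}, and identify $1-\lambda_0$ and the projection coefficient by arguing that the Perron eigenvector is nearly proportional to the level sizes $\binom{n}{k-i}$. These steps are carried out correctly and match the paper's proof, including the computation of the leak $\sum_i (v^0)_i p_i^k$ via Lemma~\ref{lem:backward_prob} and the final summation over $t$ with the error bound $\sqrt{N}/\varepsilon = o(n^k)$.

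The one genuine gap is the step you yourself flag: the claim that the left Perron eigenvector satisfies $(v^0)_i = (1 \pm o(1))\binom{n}{k-i}/N$ is declared ``plausible'' and then granted, but it is precisely the technical heart of the argument (the paper's Lemma~\ref{th:uniform}), and your heuristic about the chain drifting to level $0$ does not yet constitute a proof. The paper closes this gap by a perturbation argument entirely in the individual space: writing $U = \phi(u)$ for the uniform vector and decomposing $U = \sum_i U^i$ along the orthogonal eigenbasis, one computes $\norm{U - UP_{\text{ind}}}_2 \le \frac{1}{\sqrt{n}N}$ directly from the leak probabilities, deduces $|U_m^i| \le \frac{1}{\sqrt{n}N(1-\lambda_i)} \le \frac{1}{\sqrt{n}N\varepsilon}$ for $i \ge 1$ by orthogonality, and concludes that $U$ deviates from its projection onto the Perron direction by a relative error of at most $\frac{k-1}{\sqrt{n}\varepsilon}$, which is $o(1)$ exactly when $\varepsilon = \omega(1/\sqrt{n})$, i.e.\ when $\Pr[\alpha=1] = \omega(n^{-1/(2k-2)})$. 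So your intuition about where the hypothesis enters is correct (a perturbation scale measured against the spectral gap), but the relevant perturbation scale is the Euclidean norm of $U - UP_{\text{ind}}$, namely $\Theta(1/(\sqrt{n}N))$, rather than $\max_j p_j^k = \Theta(\Pr[\alpha=1]/n)$; to complete the proof you would need to supply this (or an equivalent) quantitative estimate.
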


We start with a few preparatory results. Recall that by Theorem~\ref{lm:p-f} the largest eigenvalue of a positive matrix has a one-dimensional eigenspace. Also this theorem asserts that both left and right eigenvectors that correspond to the largest eigenvalue have all components with the same sign and they do not have any zero component. Let $\pi^*$ be such a left eigenvector with positive components for $P$ and let it be normalized in such way that $\norm{\pi^*}_1 = 1$.
We view $\pi^*$ as distribution over the levels of the plateau and call it the \emph{conditional stationary distribution of $P$} since it does not change in one iteration under the condition that the algorithm does not find the optimum. Also let $u = (u_0, \dots, u_{k - 1})$ be the probability distribution in the level space such that $\phi(u)$ is the uniform distribution in the individual space. Hence
\begin{align*}
  u_i = \binom{n}{k - i} \bigg/ \sum\limits_{j = 0}^{k - 1} \binom{n}{k - j} = \binom{n}{k - i}N^{-1}
\end{align*}
for all $i \in [0..k-1]$. Our next target is showing that $\pi^*$ and $u$ are asymptotically equal.
For this, we need the following basis of the level space.

\begin{lemma}\label{th:basis}
  There exists a basis $\{e^i\}_{i = 0}^{k - 1}$ of the level space with the following properties.
  \begin{enumerate}
  \item $\pi^* = e^0$.
  \item $e^i$ is an eigenvector of $P$ for all $i \in [0..k-1]$.
  \item The $\phi(e^i)$ are orthogonal in the individual space.
  \end{enumerate}
\end{lemma}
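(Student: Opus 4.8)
The plan is to transfer the whole question into the individual space, where the relevant matrix $P_{\text{ind}}$ is symmetric and hence self-adjoint, and to exploit that $\phi$ carries the level chain faithfully into this more convenient setting. The central observation is that the image $V \coloneqq \phi(\reals^k)$ is a $P_{\text{ind}}$-invariant subspace of the individual space: for every $x \in \reals^k$ we have $\phi(x) P_{\text{ind}} = \phi(xP) \in V$ by Lemma~\ref{lm:phi-p}. Since $\phi$ is linear (Lemma~\ref{lm:phi-linear}) and injective (each level is non-empty, so each coordinate $x_j$ can be read off from any entry of $\phi(x)$ lying in level $j$), the map $\phi$ is a linear isomorphism from the level space onto the $k$-dimensional subspace $V$.

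First I would argue that the restriction $A \coloneqq P_{\text{ind}}|_V$ is self-adjoint on $V$ with the inner product inherited from the individual space. This is immediate: $P_{\text{ind}}$ is symmetric, hence self-adjoint on the entire individual space, and the restriction of a self-adjoint operator to an invariant subspace is again self-adjoint, because $\langle Av, w\rangle = \langle v P_{\text{ind}}, w\rangle = \langle v, w P_{\text{ind}}\rangle = \langle v, Aw\rangle$ for all $v, w \in V$. Applying the Hilbert--Schmidt theorem (Lemma~\ref{lm:h-sch}) to $A$ then yields an orthonormal basis $\{v^i\}_{i=0}^{k-1}$ of $V$ consisting of eigenvectors of $A$.

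Next I would pull this basis back to the level space by setting $e^i \coloneqq \phi^{-1}(v^i)$; by injectivity these are well defined and form a basis of $\reals^k$. Each $e^i$ is an eigenvector of $P$: if $v^i P_{\text{ind}} = \mu_i v^i$, then $\phi(e^i P) = \phi(e^i) P_{\text{ind}} = v^i P_{\text{ind}} = \mu_i v^i = \phi(\mu_i e^i)$, and injectivity of $\phi$ gives $e^i P = \mu_i e^i$. This establishes property~2, and property~3 is then immediate, since $\phi(e^i) = v^i$ and the $v^i$ were chosen orthonormal, hence orthogonal.

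Finally, for property~1 I would single out the eigenvector belonging to the largest eigenvalue $\lambda_0$. By Perron--Frobenius (Theorem~\ref{lm:p-f}) the $\lambda_0$-eigenspace of $P$ is one-dimensional and spanned by $\pi^*$, so exactly one $e^i$, say $e^{i_0}$, has eigenvalue $\lambda_0$ and is therefore a scalar multiple of $\pi^*$. Rescaling $e^{i_0}$ to equal $\pi^*$ (which preserves both the eigenvector property and the orthogonality of the $\phi(e^i)$) and reindexing so that $i_0 = 0$ yields $e^0 = \pi^*$. I expect the main subtlety to be the self-adjointness-on-an-invariant-subspace step, together with verifying that the $\lambda_0$-eigenspace of $A$ is genuinely one-dimensional so that the Perron--Frobenius vector is not blended with other eigenvectors; the latter holds because any $v = \phi(x) \in V$ with $v P_{\text{ind}} = \lambda_0 v$ forces $xP = \lambda_0 x$, placing $x$ in the one-dimensional $\lambda_0$-eigenspace of $P$.
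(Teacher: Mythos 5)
Your proposal is correct and follows essentially the same route as the paper: view $\phi(\reals^k)$ as a $P_{\text{ind}}$-invariant subspace via Lemma~\ref{lm:phi-p}, restrict the self-adjoint operator $P_{\text{ind}}$ to it, apply the Hilbert--Schmidt theorem (Lemma~\ref{lm:h-sch}) to obtain an orthonormal eigenbasis, pull it back through $\phi^{-1}$, and normalize the Perron--Frobenius eigenvector to get $e^0 = \pi^*$. Your version is in fact slightly more careful than the paper's at two points --- using injectivity of $\phi$ to transfer the eigenvector property from $P_{\text{ind}}$ back to $P$, and verifying that exactly one basis vector carries the eigenvalue $\lambda_0$ --- but these are elaborations of the same argument, not a different one.
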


\begin{proof}
Let $S$ be the level space and $S_\text{ind}$ be the individual space. Then $\phi(S)$ is a subspace of $S_\text{ind}$ with $\dim \phi(S) = k,$ since the kernel of $\phi$ is trivial.

Consider the operator $\mathcal{Q}$ that is represented by the matrix $Q$. It is a self-adjoint operator on $S_\text{ind}$, since its matrix is symmetric.
Moreover, this operator maps $\phi(S)$ into $\phi(S)$, since for all $x \in S$ by Lemma~\ref{lm:phi-p} we have $\mathcal{Q}(\phi(x)) = \phi(x) Q = \phi(xP)$.
Therefore, $\mathcal{Q}$ is a self-adjoint operator on $\phi(S)$. Thus, by Lemma~\ref{lm:h-sch} there exists an orthonormal basis $f^0, \dots, f^{k - 1}$ of $\phi(S)$ that consists of eigenvectors of $\mathcal{Q}$.
Let $e^i = \phi^{-1}(f^i)$ for all $i \in [0..k - 1]$. By Lemma~\ref{th:embedded_spectrum} the $e^i$ are eigenvectors of $P$. By the linearity of $\phi$ (Lemma~\ref{lm:phi-linear}) they are linearly independent, hence they form a basis of $S$.

By assuming that $e^0$ corresponds to the largest eigenvalue and multiplying $e^0$ by a suitable scalar, we also satisfy the first property of the lemma.
\end{proof}

We use the basis from Lemma~\ref{th:basis} to prove that $\phi(\pi^*)$ is very close to the uniform distribution.%we can prove the following relation between the vectors $\pi^*$ and $u$.
\begin{lemma}\label{th:uniform}
% The uniform distribution is almost $\pi^*$.
If $\Pr[\alpha = 1] = \omega\left(n^{-\frac{1}{2k - 2}}\right)$, then for all $j \in [0..k-1]$, we have $\pi_j^* = u_j (1 + \gamma_j)$, where $|\gamma_j| \le \gamma$ for some $\gamma = o(1)$.
\end{lemma}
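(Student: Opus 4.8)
The plan is to decompose $u$ in the eigenbasis $\{e^i\}_{i=0}^{k-1}$ of Lemma~\ref{th:basis}, writing $u = \sum_{i=0}^{k-1} a_i e^i$ with $e^0 = \pi^*$, and to show that the non-principal part $w := u - a_0\pi^* = \sum_{i\ge 1} a_i e^i$ is negligible relative to $u$ in \emph{every single coordinate}. The engine is the interplay between the two chains already used above: the coordinates of a level-space vector become, after applying $\phi$, constant on each Hamming level, so the standard Euclidean norm in the individual space reads
\begin{align*}
  \norm{\phi(x)}_2^2 = \sum_{j=0}^{k-1} \frac{x_j^2}{\binom{n}{k-j}},
\end{align*}
a norm whose coordinate weights $\binom{n}{k-j}^{-1}$ exactly match the level sizes. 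Together with the spectral gap from Lemma~\ref{lm:spectrum} and the near-stationarity of $u$, this weighting is precisely what will let us control the exponentially small top levels.

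First I would establish that $u$ is almost stationary for $P$. Using the detailed-balance identity $\binom{n}{k-i} p_i^j = \binom{n}{k-j} p_j^i$ of Lemma~\ref{lem:backward_prob}, together with $u_i = \binom{n}{k-i} N^{-1}$ and $\sum_{i=0}^{k-1} p_j^i = 1 - p_j^k$ from Lemma~\ref{lm:rowsum}, a one-line computation gives
\begin{align*}
  (uP)_j = \frac1N \sum_{i=0}^{k-1} \binom{n}{k-i} p_i^j = \frac1N \binom{n}{k-j}\sum_{i=0}^{k-1} p_j^i = u_j(1 - p_j^k),
\end{align*}
so that $(u - uP)_j = u_j p_j^k$. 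Since $p_j^k = O(n^{-(k-j)})$ by Lemma~\ref{lm:probs} and $u_j = \Theta(n^{-j})$, every coordinate of $u - uP$ is $O(n^{-k}) = O(1/N)$, whence $\norm{u-uP}_1 = O(1/N)$ and, by Lemmas~\ref{lm:phi-manhattan} and~\ref{lm:norm-manhattan-euclidean}, $\norm{\phi(u-uP)}_2 \le \norm{\phi(u-uP)}_1 = \norm{u-uP}_1 = O(1/N)$.

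The key step turns the spectral gap into a coordinate-wise bound on $w$. Since $u = \sum_i a_i e^i$ and the $e^i$ are eigenvectors with real eigenvalues $\lambda_i$ (Lemma~\ref{lm:spectrum}), linearity of $\phi$ (Lemma~\ref{lm:phi-linear}) and $\phi(xP) = \phi(x)P_{\text{ind}}$ (Lemma~\ref{lm:phi-p}) give $\phi(u - uP) = \sum_i a_i(1 - \lambda_i)\phi(e^i)$. As the $\phi(e^i)$ are orthogonal and $1-\lambda_i > \varepsilon$ for $i \ge 1$,
\begin{align*}
  \norm{\phi(u - uP)}_2^2 = \sum_{i=0}^{k-1} a_i^2 (1-\lambda_i)^2 \norm{\phi(e^i)}_2^2 \ge \varepsilon^2 \sum_{i=1}^{k-1} a_i^2 \norm{\phi(e^i)}_2^2 = \varepsilon^2 \norm{\phi(w)}_2^2,
\end{align*}
so $\norm{\phi(w)}_2 = O(1/(N\varepsilon))$. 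From the weighted-norm identity, $w_j^2/\binom{n}{k-j} \le \norm{\phi(w)}_2^2$, i.e. $|w_j| \le \sqrt{\binom{n}{k-j}}\,\norm{\phi(w)}_2$, and dividing by $u_j = \binom{n}{k-j}N^{-1}$ yields the relative error
\begin{align*}
  \frac{|w_j|}{u_j} = O\!\left(\frac{1}{\sqrt{\binom{n}{k-j}}\,\varepsilon}\right),
\end{align*}
which is largest on the top level $j = k-1$, where $\binom{n}{k-j} = n$. This is exactly where the hypothesis enters: $\Pr[\alpha=1] = \omega(n^{-1/(2k-2)})$ forces $\varepsilon = \omega(n^{-1/2})$, hence $\sqrt{n}\,\varepsilon = \omega(1)$ and $|w_j|/u_j = o(1)$ uniformly in $j$.

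It remains to pin down $a_0$ and assemble the claim. From $\norm{w}_1 = \norm{\phi(w)}_1 \le \sqrt{N}\,\norm{\phi(w)}_2 = O(1/(\sqrt{N}\varepsilon)) = o(1)$ (Lemmas~\ref{lm:phi-manhattan} and~\ref{lm:norm-manhattan-euclidean}) and $\norm{a_0\pi^*}_1 = |a_0|$, the triangle inequality gives $|a_0| = \norm{u - w}_1 = 1 \pm o(1)$; inspecting the coordinate $j=0$, where $u_0 = 1 - o(1)$ and $|w_0| = o(1)$ force $a_0\pi_0^* = 1 - o(1) > 0$, shows $a_0 > 0$, hence $a_0 = 1 + o(1)$. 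Finally $\pi_j^* = (u_j - w_j)/a_0 = u_j(1 - w_j/u_j)/a_0 = u_j(1 + \gamma_j)$ with $|\gamma_j|$ bounded uniformly by some $\gamma = o(1)$. The main obstacle, and the reason a crude $\ell_1$ argument does not suffice, is exactly this uniformity over all levels: $u$ is concentrated on level $0$ with $u_{k-1}$ as small as $\Theta(n^{-(k-1)})$, so an \emph{absolute} error of $o(1)$ tells us nothing about the top levels. It is the $\phi$-induced weighting of the Euclidean norm, matched against the spectral gap $\varepsilon$, that delivers the correct per-level \emph{relative} error and makes the precise hypothesis on $\Pr[\alpha=1]$ the natural threshold.
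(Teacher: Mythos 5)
Your proof is correct and follows essentially the same route as the paper: decompose $u$ in the eigenbasis of Lemma~\ref{th:basis}, exploit that $u$ is nearly stationary under $P$, and convert the spectral gap of Lemma~\ref{lm:spectrum} into a coordinate-wise relative error via orthogonality in the individual space. The only organizational difference is where the crucial factor $\sqrt{n}$ comes from: the paper bounds $\norm{U - UP_{\text{ind}}}_2 \le \frac{1}{\sqrt{n}N}$ directly and reads off the uniform relative error $\frac{k-1}{\sqrt{n}\varepsilon}$ for every individual, whereas you settle for the weaker $\norm{\phi(u-uP)}_2 = O(1/N)$ and recover the missing $\sqrt{n}$ from the level-size weighting $\norm{\phi(x)}_2^2=\sum_j x_j^2/\binom{n}{k-j}$, which is tightest at level $k-1$ where $\binom{n}{1}=n$ --- both yield the same threshold $\varepsilon = \omega(n^{-1/2})$.
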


%The detailed proof of Lemma~\ref{th:uniform} can be found in annex.
% The proof is again omitted for reasons of space.

\begin{proof}

  \begin{figure}[t]
    \centering
    \begin{tikzpicture}
      % \draw (-2.5, 1.5) -- (0, 1.5);
      % \draw (-2.5, 3.5) -- (12, 3.5);
      % \draw (0, 0) -- (0, 4.5);
      % \draw (8, 0) -- (8, 4.5);

      \node (m) [draw, circle, blue, minimum width=0.2] at (2.35, -0.1) {};
      \node (j) [draw, circle, blue, minimum width=0.2] at (2.25, 1.9) {};

      \node [left] at (0, 0) {Individual space:};
      \node [right] at (0, 0) {$U$};
      \node [right] at (0.4, 0) {$= (U_0, \dots, U_m, \dots, U_{N-1})$};
      \node [right] at (4, 0) {$= \Pi^*$};
      \node [right] at (4.7, 0) {$ +\ U^1 + \ldots $};
      \node [right] at (6.4, 0) {$ +\ U^{k - 1}$};

      \node [left] at (0, 2) {Level space:};
      \node [right] at (0, 2) {$u$};
      \node [right] at (0.4, 2) {$= (u_0, \dots, u_j, \dots, u_{k-1})$};
      \node [right] at (4, 2) {$= \pi^*$};
      \node [right] at (4.7, 2) {$ +\ c_1e^1 + \ldots $};
      \node [right] at (6.4, 2) {$ +\ c_{k - 1}e^{k - 1}$};

      \draw [->] (0.2, 1.8) -- (0.2, 0.2) node [pos=0.5,right] {$\phi$};
      \draw [->] (4.6, 1.8) -- (4.6, 0.2) node [pos=0.5,right] {$\phi$};
      \draw [->] (5.3, 1.8) -- (5.3, 0.2) node [pos=0.5,right] {$\phi$};
      \draw [->] (7, 1.8) -- (7, 0.2) node [pos=0.5,right] {$\phi$};

      \draw [->] (m) -- (j) node [pos=0.5,right, text width=1cm] {individual of level};

      \draw [draw = none] (2.2, 2.2) -- (2.2, 2.4) node [pos=0.5,sloped] {$=$};
      \node [above] at (2.2, 2.4) {$\binom{n}{k - j}N^{-1}$};

      \node (pi) [draw, circle, blue, minimum width=0.5cm] at (5.1, 2.95) {};
      \draw [bend right,->] (pi) to (2.5, 3.2);
      \node [above] at (3.5, 3.4) {$= (1 + \gamma_j) \cdot$};

      \draw [draw = none] (4.6, 2.2) -- (4.6, 2.4) node [pos=0.5,sloped] {$=$};
      \node [above] at (5.2, 2.6) {$(\pi_0^*, \dots, \pi_j^*, \dots, u_{k-1}^*)$};
      \draw [decorate,decoration={brace, amplitude=0.2cm,aspect=0.7}] (6.7, 2.7) -- (3.7, 2.7);

      \draw [draw = none] (2.2, -0.3) -- (2.2, -0.5) node [pos=0.5,sloped] {$=$};
      \node [below] at (2.2, -0.5) {$N^{-1}$};

      \node (Pi) [draw, circle, blue, minimum width=0.6cm] at (5.05, -1.05) {};
      \draw [bend left,->] (Pi) to (2.3, -1.1);
      \node [below] at (3.5, -1.5) {$= (1 + \gamma_j) \cdot$};

      \draw [draw = none] (4.6, -0.3) -- (4.6, -0.5) node [pos=0.5,sloped] {$=$};
      \node [below] at (5.2, -0.8) {$(\Pi_0^*, \dots, \Pi_m^*, \dots, \Pi_{k-1}^*)$};
      \draw [decorate,decoration={brace, amplitude=0.2cm,aspect=0.3}] (3.7, -0.8) -- (6.7, -0.8);
      
      \draw [thick] (-3.5, -2.2) rectangle (9, 4.1);

      % \draw (2.35, -0.1) circle (0.2); 
    \end{tikzpicture}
    \caption{Illustration of the terms and their relations used in Lemma~\ref{th:uniform}}
    \label{fig:illustration}
  \end{figure}
Figure~\ref{fig:illustration} illustrates the relation of the terms used in this proof to make it easier to follow. Lemma~\ref{th:basis}, there exist unique $c_0, c_1, \dots, c_{k - 1} \in \reals$ such that
\[u = \sum_{i = 0}^{k - 1} c_i e^i.\]
 If we transfer this decomposition into the individuals space (using the linearity of $\phi$, see Lemma~\ref{lm:phi-linear}), we obtain
\[U = \sum_{i = 0}^{k - 1} U^i,\]
where we define $U \coloneqq \phi(u)$ and $U^i \coloneqq \phi(c_i e^i)$ for all $i \in [0..k - 1]$. Note that the vector $U$ describes the uniform distribution in the individuals space and hence all its components are equal to $\frac{1}{N}$. For brevity we also define $\Pi^* \coloneqq \phi(\pi^*)$.

We now aim at finding a useful connection between the components of $\Pi^*$ and $U$. Namely, if for all levels $j \in [0..k - 1]$ we prove that for any individual $m$ in level $j$ we have $\Pi_m^* = (1 + \gamma_j) U_m$ with $\gamma_j$ that satisfies the conditions of the theorem, we simultaneously prove the same relation for the components of $\pi^* $ and $u$.

Recall that $\pi^*$ is a normalized vector. Thus, by Lemma~\ref{lm:phi-manhattan} we have $\norm{\Pi^*}_1 = 1$. Recall also that $\pi^* = e^0$ (by the choice of the basis) and therefore, we have $\Pi^* = \frac{U^0}{\norm{U^0}_1}$. For all $m \in [0..N-1]$, we have

\begin{align}\label{eq:pi_star_m}
  \Pi_m^* = \frac{U_m^0}{\norm{U^0}_1}.
\end{align}

The $m$-th component of $U^0$ is
\begin{align*}
  U_{m}^0 = U_{m} - \sum\limits_{i = 1}^{k - 1} U_{m}^i = U_{m} \left(1 - \sum\limits_{i = 1}^{k - 1} \frac{U_{m}^i}{U_{m}}\right) = U_{m} \left(1 - N\sum\limits_{i = 1}^{k - 1} U_{m}^i\right).
\end{align*}
With $\beta_m \coloneqq N\sum\limits_{i = 1}^{k - 1} U_{m}^i$, this simplifies to
\begin{align*}
  U_{m}^0 = U_{m} \left(1 - \beta_m\right) = \frac{1}{N}\left(1 - \beta_m\right).
\end{align*}
We also compute the denominator of~\eqref{eq:pi_star_m} as
\begin{align*}
 \norm{U^0}_1 = \sum_{m=0}^{N-1} |U_{m}^0| = \sum_{m=0}^{N - 1} \frac{1}{N}(1 - \beta_m) = 1 - \sum_{m=0}^{N - 1} \frac{1}{N}\beta_m.
\end{align*}

Putting this into~\eqref{eq:pi_star_m}, we obtain
\begin{align}\label{eq:pi_star_m_beta}
    \Pi_m^* = U_m\frac{1 - \beta_m}{1 - \sum_{m=0}^{N - 1} \frac{1}{N}\beta_m}.
\end{align}

In the remainder of the proof we aim at bounding $|\beta_m|$ from above by some $\beta = o(1)$. Then~\eqref{eq:pi_star_m_beta} gives
\begin{align*}
  \frac{1 - \beta_m}{1 - \sum_{m=0}^{N - 1} \frac{1}{N}\beta_m} \in \left[\frac{1 - \beta}{1 + \beta}, \frac{1 + \beta}{1 - \beta}\right] \subset \left[1 - \frac{2\beta}{1 - \beta}, 1 + \frac{2\beta}{1 - \beta}\right],
\end{align*}
hence defining $\gamma = \frac{2\beta}{1 - \beta} = o(1)$ proves the lemma.

To find the desired $\beta$ with $|\beta_m| < \beta = o(1)$, we regard the vector $U - UQ.$ On the one hand, its elements are very small. For all $m \in [0..N - 1]$, we have
\begin{align*}
  (U - UQ)_m = U_m - \sum\limits_{\ell = 0}^{N - 1} q_\ell^m U_\ell = \frac{1}{N} \left( 1 - \sum\limits_{\ell = 0}^{N - 1} q_m^\ell\right) = \frac{q_m^N}{N},
\end{align*}
where $q_\ell^m$ is the probability to go from individual $\ell$ to individual $m$ (recall that $q_\ell^m = q_m^\ell$) and $q_m^N$ is the probability to leave the plateau from the $m$-th individual. The Euclidean norm of this vector is also very small.
\begin{align*}
\norm{U - UQ}_2 &= \sqrt{\sum\limits_{m = 0}^{N - 1} \left(\frac{q_m^N}{N}\right)^2} \\
                             &= \frac{1}{N} \sqrt{\sum\limits_{i = 0}^{k - 1} \binom{n}{k - i}  \biggl(\Pr[\alpha = k - i] \bigg/ \binom{n}{k - i}\biggr)^2}.
\end{align*}

To bound the sum under the root we notice that it is maximized when we maximize $\Pr[\alpha = 1]$ (since we consider only constant $k$, we assume that $n$ is large enough so that $n > 2k$). Let this probability be equal to $1$, then we have only one non-zero summand and hence,
\begin{align*}
  \norm{U - UQ}_2 &\le \frac{1}{N} \sqrt{\binom{n}{1}^{-1}} = \frac{1}{\sqrt{n}N}.
\end{align*}

On the other hand, if we recall that the $U^i$ are eigenvectors, then we have
\begin{align*}
  U - UQ = \sum\limits_{i = 0}^{k - 1} U^i - \sum\limits_{i = 0}^{k - 1} \lambda_i U^i = \sum\limits_{i = 0}^{k - 1} (1 - \lambda_i) U^i.
\end{align*}

As the $U^i$ are orthogonal, for every $i \in [0..k - 1]$ we have
\begin{align*}
  \norm{(1 - \lambda_i) U^i}_2 \le \norm{U - UQ}_2 \le \frac{1}{\sqrt{n}N}.
\end{align*}

Since the absolute value of every component of a vector cannot be larger than its Euclidean norm, for all $m \in [0..N - 1]$ we conclude that
\begin{align*}
  |(1 - \lambda_i) U_m^i| \le \norm{(1 - \lambda_i) U^i}_2 \le \frac{1}{\sqrt{n}N}.
\end{align*}

Recall that by Lemma~\ref{lm:spectrum} we have
\[
  (1 - \lambda_i) > \varepsilon > \frac{\left(\Pr[\alpha = 1]\right)^{k - 1}}{(k - 1)2^k}
\]
for all $i \ne 0.$ Consequently,
\begin{align*}
  |\beta_m| &= \left|N\sum\limits_{i = 1}^{k - 1} U_m^i\right| \le N\sum\limits_{i = 1}^{k - 1} |U_m^i| \le N\sum\limits_{i = 1}^{k - 1} \frac{1}{\sqrt{n} N (1 - \lambda_i)} \\
  & \le \frac{(k - 1)}{\sqrt{n}\varepsilon} \eqqcolon \beta.
\end{align*}

Since we have $\Pr[\alpha = 1] = \omega(n^{-\frac{1}{2k - 2}})$, we conclude that
\begin{align*}
  \varepsilon = \frac{\left(\omega(n^{-\frac{1}{2k - 2}})\right)^{k - 1}}{(k - 1)2^k} = \omega(1/\sqrt{n}).
\end{align*}

Thus,
\begin{align*}
  \beta = \frac{(k - 1)}{\sqrt{n}\varepsilon} = o(1)
\end{align*}
as desired.

\end{proof}

We are now in the position to prove our main result.

\begin{proof}[Proof of Theorem~\ref{th:runtime}]
  To prove the theorem we first estimate the probability that the runtime $T$ is greater than $t$ by $\Pr[T > t] = \norm{\pi P^t}_1$, where $\pi$ is the initial distribution over the levels of the plateau (see Section~\ref{sec:leaky_chains}).
  Then by~\eqref{eq:expectation-distribution} we estimate the expected runtime as $E[T] = \sum_{t = 1}^{+\infty} \norm{\pi P^{t - 1}}_1$.

To analyse $\norm{\pi P^t}_1$, we decompose $\pi$ into a sum of eigenvectors of $P$ using the basis $e^0, \dots, e^{k - 1}$ from Lemma~\ref{th:basis}. Let $\pi^0, \dots, \pi^{k - 1}$ be scalar multiples of $e^0, \dots, e^{k - 1}$ such that
\begin{align}\label{eq:pi_decomposition}
  \pi = \sum\limits_{i = 0}^{k - 1} \pi^i.
\end{align}
Using the triangle inequalities (Lemma~\ref{lm:triangle}) we obtain
\begin{align*}
\norm{\pi^0 P^t}_1 - \norm{\sum_{i = 1}^{k - 1} \pi^i P^t}_1 \le \norm{\pi P^t}_1 \le \norm{\pi^0 P^t}_1 + \norm{\sum_{i = 1}^{k - 1} \pi^i  P^t}_1.
\end{align*}
Since the $\pi^i$ are the eigenvectors of $P$, we have
\begin{align}\label{eq:pr-runtime-1}
\lambda_0^t \norm{\pi^0}_1 - \norm{\sum_{i = 1}^{k - 1} \lambda_i^t\pi^i }_1 \le \norm{\pi P^t}_1 \le \lambda_0^t\norm{\pi^0 }_1 + \norm{\sum_{i = 1}^{k - 1} \lambda_i^t\pi^i }_1.
\end{align}

Now we estimate the ``error term'' $\norm{\sum_{i = 1}^{k - 1} \lambda_i^t\pi^i }_1$ of these bounds.
%from the right and the left parts of~\eqref{eq:pr-runtime-1}.
First, by Lemma~\ref{lm:phi-manhattan} and by Lemma~\ref{lm:phi-linear}, we have
\begin{align}\label{eq:pr-runtime-2}
\norm{\sum_{i = 1}^{k - 1} \lambda_i^t\pi^i }_1 = \norm{\phi\left(\sum_{i = 1}^{k - 1} \lambda_i^t\pi^i\right) }_1 = \norm{\sum_{i = 1}^{k - 1} \lambda_i^t\phi(\pi^i) }_1.
\end{align}
Using Lemma~\ref{lm:norm-manhattan-euclidean} and Lemma~\ref{lm:euclidean-norm-1} we estimate
\begin{align}\label{eq:pr-runtime-3}
\norm{\sum_{i = 1}^{k - 1} \lambda_i^t\phi(\pi^i) }_1 \le \sqrt{N} \norm{\sum_{i = 1}^{k - 1} \lambda_i^t\phi(\pi^i) }_2 \le \sqrt{N} \max_{i \in [1..k - 1]}(|\lambda_i^t|)\norm{\sum_{i = 1}^{k - 1} \phi(\pi^i) }_2.
\end{align}
By Lemma~\ref{lm:spectrum} we have $\max_{i \in [1..k - 1]}(|\lambda_i^t|) \le (1 - \varepsilon)^t$, where $\varepsilon = \frac{\left(\Pr[\alpha = 1])\right)^{k - 1}}{(k - 1)2^k}$. Hence, by Lemma~\ref{lm:euclidean-norm-2} and Lemma~\ref{lm:norm-manhattan-euclidean}, we conclude
\begin{align}\label{eq:pr-runtime-4}
\sqrt{N} \max_{i \in [1..k - 1]}(|\lambda_i^t|)\norm{\sum_{i = 1}^{k - 1} \phi(\pi^i) }_2 \le \sqrt{N}(1 - \varepsilon)^t \norm{\phi(\pi)}_1 = \sqrt{N}(1 - \varepsilon)^t.
\end{align}
Finally, by~\eqref{eq:pr-runtime-1},~\eqref{eq:pr-runtime-2},~\eqref{eq:pr-runtime-3}, and~\eqref{eq:pr-runtime-4} we obtain that

\begin{align}\label{eq:pr-runtime}
\norm{\pi P^t}_1 = \lambda_0^t \norm{\pi^0}_1 + r(t),
\end{align}
where $|r(t)| \le \sqrt{N}(1 - \varepsilon)^t$.

To estimate the expected runtime we put~\eqref{eq:pr-runtime} into~\eqref{eq:expectation-distribution} and obtain
\begin{align}\label{eq:expected-runtime}
\begin{split}
  E[T] &= \sum\limits_{t = 1}^{+\infty}\norm{\pi P^{t - 1}}_1 = \sum\limits_{t = 1}^{+\infty}\left(\lambda_0^{t - 1} \norm{\pi^0}_1 + r(t - 1) \right) \\
       &= \sum\limits_{t = 1}^{+\infty} \lambda_0^{t - 1} \norm{\pi^0}_1 + \sum\limits_{t = 1}^{+\infty} r(t - 1) = \frac{\norm{\pi^0}_1}{1 - \lambda_0} + \sum\limits_{t = 0}^{+\infty} r(t).
\end{split}
\end{align}
By~\eqref{eq:pr-runtime-4} we have
\begin{align*}
  \left|\sum\limits_{t = 0}^{+\infty} r(t) \right| \le \sum\limits_{t = 0}^{+\infty} |r(t)| \le \sum\limits_{t = 1}^{+\infty} \sqrt{N}(1 - \varepsilon)^t = \frac{(1 - \varepsilon)\sqrt{N}}{\varepsilon} \le \frac{\sqrt{N}}{\varepsilon}.
\end{align*}
From the assumptions of the theorem we have
\begin{align*}
\varepsilon = \frac{(\Pr[\alpha = 1])^{k - 1}}{(k - 1)2^k} = \frac{\omega(1/\sqrt{n})}{(k - 1)2^k} = \omega(1/\sqrt{n}),
\end{align*}
and hence
\begin{align*}
 \left|\sum\limits_{t = 0}^{+\infty} r(t) \right| = o(\sqrt{nN}).
\end{align*}

It remains to estimate $\norm{\pi^0}_1$ and $\lambda_0$. Recall that by Lemma~\ref{lm:phi-manhattan} we have $\norm{\pi^0}_1 = \norm{\phi(\pi^0)}_1$.
From the linearity of $\phi$ (see Lemma~\ref{lm:phi-linear}) and~\eqref{eq:pi_decomposition} we obtain $\phi(\pi) = \sum_{i = 0}^{k - 1}\phi(\pi^i)$.
Since $\pi^0, \dots \pi^{k - 1}$ are scalar multiples of $e^0, \dots e^{k - 1}$ and all $\phi(e^i)$ are orthogonal, we have a decomposition of $\phi(\pi)$ into a sum of orthogonal vectors.
Therefore, by Lemma~\ref{lm:projection}, we have

\begin{align}\label{eq:coordinate}
  \norm{\phi(\pi^0)}_1 = \frac{\langle\phi(\pi), \phi(e^0)\rangle}{\langle\phi(e^0), \phi(e^0)\rangle}.
\end{align}

By Lemma~\ref{th:uniform}, the components of $e^0$ are almost equal to the components of the vector $u$ of the uniform distribution in the level space\footnote{Note that $e^0$ is the same vector as $\pi^*$ in Lemma~\ref{th:uniform}. However we now refer to this vector as $e^0$ to underline that we consider it as a basis vector of the level space, while in Lemma~\ref{th:uniform} we referred to it as $\pi^*$ since we considered it as a vector of the probabilistic distribution over the states of the level chain.}. Transferring this result to the individual space, we have $(\phi(e^0))_m = \frac{1}{N}(1 + \gamma_j)$ for all $j \in [0.. k - 1]$ and all individuals $m$ that belong to level $j$.
This and the fact that by Lemma~\ref{lm:phi-manhattan} we have $\norm{\phi(\pi)}_1 = \norm{\pi}_1 = 1$ allow to calculate the inner products in~\eqref{eq:coordinate} and obtain
\begin{align}\label{eq:pi}
  \norm{\pi^0}_1 = \frac{\sum\limits_{j = 0}^{k - 1}\sum\limits_{m \in \text{level }j} (\phi(\pi))_m \frac{1 + \gamma_j}{N}}{\sum\limits_{j = 0}^{k - 1}\sum\limits_{m \in \text{level }j}  \left( \frac{1 + \gamma_j}{N}\right)^2} = 1 + o(1).
\end{align}

%To estimate $\lambda_0$ notice that $(1 - \lambda_0)$ is the mass that is lost by the vector $\pi^*$ after the application of the matrix $P.$
We compute $(1 - \lambda_0)$ in the following way. First, since $\norm{\pi^*}_1 = 1$ and $\pi^*$ is an eigenvector of $P$, we have
\begin{align*}
  1 - \lambda_0 &= 1 - \norm{\lambda_0\pi^*}_1 = 1 - \norm{\pi^* P}_1 = 1 - \sum\limits_{i = 0}^{k - 1} \left|\sum\limits_{j = 0}^{k - 1} \pi_j^* p_j^i \right|.
\end{align*}

Since by Theorem~\ref{lm:p-f} all components of $\pi^*$ are positive and the components of $P$ are non-negative, this simplifies to

\begin{align*}
1 - \sum\limits_{i = 0}^{k - 1} \left|\sum\limits_{j = 0}^{k - 1} \pi_j^* p_j^i \right| = 1 - \sum\limits_{i = 0}^{k - 1} \sum\limits_{j = 0}^{k - 1} \pi_j^* p_j^i = 1 - \sum\limits_{j = 0}^{k - 1} \pi_j^* \sum\limits_{i = 0}^{k - 1} p_j^i.
\end{align*}

By the definition of $P$, the sum of the $j$-th row of $P$ is equal to $(1 - p_j^k)$, and we have $\sum_{i = 0}^{k - 1} \pi_i^* = \norm{\pi^*}_1 = 1$. Hence,
\begin{align*}
1 - \sum\limits_{j = 0}^{k - 1} \pi_j^* \sum\limits_{i = 0}^{k - 1} p_j^i = 1 - \sum\limits_{j = 0}^{k - 1} \pi_j^* (1 - p_j^k) = \sum\limits_{j = 0}^{k - 1} \pi_j^* p_j^k.
\end{align*}

By Lemma~\ref{th:uniform} and~\eqref{eq:trans_prob} we have
\begin{align*}
  \sum\limits_{j = 0}^{k - 1} \pi_j^* p_j^k &= \sum\limits_{j = 0}^{k - 1} \binom{n}{k - j}N^{-1}(1 + \gamma_j) \binom{n}{k - j}^{-1} \Pr[\alpha = k - j] \\
                                            &= \frac{1}{N} \sum\limits_{j = 1}^{k} \Pr[\alpha = j] (1 + \gamma_{k - j}) = \frac{1}{N}\Pr[1 \le \alpha \le k] (1 + o(1)).
\end{align*}

Thus, we obtain
\begin{align}\label{eq:lambda_0}
\lambda_0 = 1 - \frac{1}{N}\Pr[1 \le \alpha \le k] (1 + o(1)).
\end{align}

By substituting $\lambda_0$ and $\norm{\pi^0}_1$ into~\eqref{eq:expected-runtime} and~\eqref{eq:pr-runtime} with their values from~\eqref{eq:lambda_0} and~\eqref{eq:pi} and recalling that $N = \frac{n^k}{k!}(1 + o(1))$, we prove the theorem.
\end{proof}

We also underline that $r(t)$ in the tail bounds on the runtime distribution is negligible, as soon as $t = \omega(\sqrt{n}\log(n))$, that is, far before the algorithm finds the optimum.

\section{Corollaries}\label{sec:cor}

We now exploit Theorem~\ref{th:runtime} to analyze how the choice of the mutation operator influences the runtime. 

By the runtime in this section we mean the runtime of the algorithm when it starts from an arbitrary individual, which is not necessarily on the plateau. However, without proof we notice that the \oea with any mutation operator considered in this section reaches the plateau in an expected number of $O(n \log(n))$ iterations from any starting individual, which is significantly less than the time which it spends on the plateau. Therefore, the time to leave the plateau coincides with the total runtime precisely apart from lower order terms. Since, by our main result, the time to leave the plateau depends only on the probability to flip between $1$ and $k$ bits, determining the runtimes in this section  is an easy task.

We first observe that for all unbiased operators with constant probability to flip exactly one bit, the expected optimization time is $\Theta(N)$, where we recall that the size $N$ of the plateau is \[N=\sum_{i=0}^{k-1} \binom{n}{n-k+i} = (1 \pm o(1)) \frac{n^k}{k!}.\]
Hence all these mutation operators lead to asymptotically the same runtime of $\Theta(n^k)$. The interesting aspect thus is how the leading constant changes.

\subsection{Randomized Local Search and Variants}

When taking such a more precise look at the runtime, that is, including the leading constant, then the best runtime, obviously, is obtained from mutation operators which flip always between $1$ and $k$ bits. This includes variants of randomized local search which also flip more than one bit, see, e.g.,~\cite{GielW03,NeumannW07,DoerrDY20}, as long as they do not flip more than $k$ bits, but most prominently the classic randomized local search heuristic, which always flips a single random bit. Note that the latter uniformly for all $k$ (and including the case $k=1$ not regarded in this work) is among the most effective algorithms.

\subsection{Standard \oea}

The classic mutation operator in evolutionary computation is \emph{standard bit mutation}, where each bit is flipped independently with some probability (``mutation rate'') $\gamma/n$, where $\gamma$ usually is a constant. We call the \oea which uses the standard bit mutation the \emph{standard \oea}.

\begin{theorem}
  Let $\gamma$ be some arbitrary positive constant and $k \ge 2$. Then the standard \oea with mutation rate $\gamma/n$ optimizes $\plateau_k$ in an expected number of \[E[T] = (1 + o(1)) \frac{n^k}{k! e^{-\gamma}\sum_{i = 1}^k \frac{\gamma^i}{i!}}\] iterations. This time is asymptotically minimal for $\gamma = \sqrt[k]{k!} \approx k/e$.
\end{theorem}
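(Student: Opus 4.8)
The plan is to simply invoke our main result, Theorem~\ref{th:runtime}, which reduces the whole claim to computing $\Pr[1 \le \alpha \le k]$ for standard bit mutation and then optimizing this quantity over the rate parameter~$\gamma$. First I would recall that under standard bit mutation with rate $\gamma/n$ the number $\alpha$ of flipped bits follows a binomial distribution $\mathrm{Bin}(n, \gamma/n)$, so that for each fixed $i \in [1..k]$ we have the familiar Poisson limit $\Pr[\alpha = i] = \binom{n}{i}(\gamma/n)^i (1 - \gamma/n)^{n - i} = (1 + o(1)) e^{-\gamma} \gamma^i / i!$. Since $k$ is constant, we are summing a fixed finite number of such estimates, so the error terms combine without difficulty and we obtain $\Pr[1 \le \alpha \le k] = (1 + o(1)) e^{-\gamma} \sum_{i = 1}^{k} \gamma^i / i!$. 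Before applying Theorem~\ref{th:runtime} I would verify its hypothesis: $\Pr[\alpha = 1] = (1 + o(1)) \gamma e^{-\gamma}$ is a positive constant and is therefore trivially $\omega(n^{-1/(2k - 2)})$, since the latter tends to~$0$. Substituting the value of $\Pr[1 \le \alpha \le k]$ into $E[T] = \frac{n^k}{k! \Pr[1 \le \alpha \le k]}(1 + o(1))$ then yields the claimed formula for $E[T]$ at once.

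It remains to minimize $E[T]$ over $\gamma > 0$, which is equivalent to maximizing $g(\gamma) \coloneqq e^{-\gamma}\sum_{i=1}^{k} \gamma^i/i!$. Here the one genuinely useful observation is that differentiating produces a telescoping sum: writing out $g'(\gamma)$ and reindexing gives
\[
g'(\gamma) = e^{-\gamma}\left(\sum_{i=1}^{k}\frac{\gamma^{i-1}}{(i-1)!} - \sum_{i=1}^{k}\frac{\gamma^{i}}{i!}\right) = e^{-\gamma}\left(1 - \frac{\gamma^{k}}{k!}\right),
\]
since all intermediate terms cancel, leaving only the $i=1$ term of the first sum and the $i=k$ term of the second. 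Hence the unique critical point on $(0, +\infty)$ is $\gamma = \sqrt[k]{k!}$, and because $g'$ is positive for smaller $\gamma$ and negative for larger $\gamma$, this critical point is the global maximum of $g$, equivalently the minimizer of~$E[T]$.

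Finally, to justify the informal approximation $\sqrt[k]{k!} \approx k/e$, I would apply Stirling's formula $k! = (1 + o(1))\sqrt{2\pi k}\,(k/e)^k$, which gives $\sqrt[k]{k!} = (k/e)\,(2\pi k)^{1/(2k)}(1 + o(1))$, whose dominant term is $k/e$ as $k$ grows. I do not expect any serious obstacle in this argument: the bulk of the difficulty has already been absorbed into Theorem~\ref{th:runtime}, and the only mildly delicate point is to confirm that the $o(1)$ error terms in the Poisson approximation are uniform over the summands, which holds precisely because $k$ is treated as a constant. The sole piece of real content is recognizing the telescoping structure of $g'(\gamma)$, which makes the optimization clean.
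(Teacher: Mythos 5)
Your proposal is correct and follows essentially the same route as the paper's own proof: a Poisson approximation of the binomial flip probabilities, verification of the hypothesis of Theorem~\ref{th:runtime}, substitution into the main runtime formula, and maximization of $e^{-\gamma}\sum_{i=1}^k \gamma^i/i!$ via the same telescoping derivative identity, concluding with Stirling's formula. The only cosmetic difference is that you certify the critical point as a global maximum via the sign change of the derivative, while the paper checks the boundary values $d(0)=\lim_{\gamma\to\infty}d(\gamma)=0$; both arguments are equally valid.
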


\begin{proof}
For the standard bit mutation with mutation rate $\gamma/n$, the probability to flip exactly one bit is
\begin{align*}
  n \frac{\gamma}{n} \left(1 - \frac{\gamma}{n}\right)^{n - 1} \ge \gamma e^{-\gamma} (1-o(1)),
\end{align*}
which is at least some positive constant as long as $\gamma$ is a constant. Thus, we can apply Theorem~\ref{th:runtime} and obtain
\begin{align*}
 E[T] &= (1 \pm o(1))\frac{N}{\Pr[1 \le \alpha \le k]} \\
 &= (1 \pm o(1))N\left(\sum\limits_{i = 1}^k \binom{n}{i} \left(\frac{\gamma}{n}\right)^i \left(1 - \frac{\gamma}{n}\right)^{n - i}\right)^{-1} \\
 &= (1 \pm o(1))\frac{n^k}{k!}\left(\sum\limits_{i = 1}^k \frac{\gamma^i}{i!} e^{-\gamma}  \right)^{-1}.
\end{align*}

Consider $d(\gamma) = e^{-\gamma} \sum_{i = 1}^k \frac{\gamma^i}{i!}$. In order to minimize $E[T]$, we have to maximize $d(\gamma)$. Now $\gamma \mapsto d(\gamma)$ is a smooth continuous function, so its maximal value for $\gamma \in [0, +\infty)$ can only be at $\gamma = 0$, for $\gamma \to +\infty$, or in the zeros of its derivative. We have $d(0) = \lim_{\gamma \to \infty} d(\gamma) = 0$. The  derivative is
\begin{align*}
d'(\gamma) &= \left(e^{-\gamma} \sum\limits_{i = 1}^k \frac{\gamma^i}{i!}\right)'
       = e^{-\gamma} \sum\limits_{i = 1}^k \frac{i\gamma^{i - 1}}{i!} - e^{-\gamma} \sum\limits_{i = 1}^k \frac{\gamma^i}{i!} \\
       &= e^{-\gamma}\left( \sum\limits_{i = 0}^{k - 1} \frac{\gamma^i}{i!} - \sum\limits_{i = 1}^k \frac{\gamma^i}{i!} \right) = e^{-\gamma} \left(1 - \frac{\gamma^k}{k!}\right).
\end{align*}
Hence the only value of $\gamma$ with $d'(\gamma) = 0$ is $\gamma = \sqrt[k]{k!}$. For this value we have $d(\sqrt[k]{k!}) > 0$, so this defines the unique optimal mutation rate. Finally, by Stirling's formula $k! \approx \sqrt{2\pi k} \left(\frac{k}{e}\right)^k$ we have
%\begin{align*}
$
 \sqrt[k]{k!} \approx (2\pi k)^\frac{1}{2k} \frac{k}{e} \approx \frac{k}{e}.
$
%\end{align*}
\end{proof}

\subsection{Fast \oea}

The \emph{fast \oea} recently proposed in~\cite{doerr-fast-ga} is simply a \oea that uses standard bit mutation with a random mutation rate $\gamma/n$ with $\gamma \in [1..n/2]$ chosen according to a power-law distribution. More precisely, for a parameter $\beta > 1$ which is assumed to be a constant (independent of $n$), we have
\begin{align*}
  \Pr[\gamma = i] = 0
\end{align*} for every $i > n/2$ and $i = 0$, and
\begin{align*}
  \Pr[\gamma = i] = i^{-\beta} / H_{n/2, \beta}
\end{align*}
otherwise, where $H_{n/2, \beta} := \sum_{i=1}^{n/2} i^{-\beta}$ is a generalized harmonic number.

\begin{theorem}\label{th:fga}
  For $k \ge 2$ the expected runtime of the fast \oea on $\plateau_k$ is $C_{kn} \frac{n^k}{k!}$, where $C_{kn} \coloneqq \frac{H_{n/2, \beta}}{H_{k, \beta}}(1 + o(1))$ can be bounded by constants, namely $C_{kn} \in \left[\frac{\frac{1}{\beta - 1} - o(1)}{H_{k,\beta}}, \frac{\frac{1}{\beta - 1} + 1}{H_{k,\beta}}\right]$.
\end{theorem}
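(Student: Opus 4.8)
The plan is to reduce the whole statement, via the main theorem, to the single constant $\Pr[1 \le \alpha \le k]$, where now $\alpha$ is the number of bits flipped by standard bit mutation with the power-law random rate $\gamma/n$. First I would check the hypothesis of Theorem~\ref{th:runtime}: since $\Pr[\gamma = 1] = 1/H_{n/2,\beta} \to 1/\zeta(\beta)$ is a positive constant and, conditioned on $\gamma = 1$, standard bit mutation flips exactly one bit with probability $(1-\tfrac1n)^{n-1} = (1-o(1))e^{-1}$, we have $\Pr[\alpha = 1] = \Omega(1) = \omega(n^{-1/(2k-2)})$. Hence Theorem~\ref{th:runtime} applies and gives $E[T] = (1\pm o(1))\,N/\Pr[1 \le \alpha \le k]$ with $N = \frac{n^k}{k!}(1 \pm o(1))$; the $O(n\log n)$ time to reach the plateau from an arbitrary start is negligible by the remark opening this section. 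Thus $C_{kn} = (1\pm o(1))/\Pr[1 \le \alpha \le k]$ and everything hinges on that probability.

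To estimate it I would condition on the random rate and write
\[
\Pr[1 \le \alpha \le k] = \frac{1}{H_{n/2,\beta}} \sum_{i=1}^{n/2} i^{-\beta}\, \Pr\!\left[1 \le \operatorname{Bin}(n, i/n) \le k\right].
\]
The key observation is that the summand decays rapidly in $i$: for constant $i$ the binomial $\operatorname{Bin}(n,i/n)$ is asymptotically Poisson with mean $i$, so $\Pr[1 \le \operatorname{Bin}(n,i/n) \le k] = \Theta(1)$, whereas for growing $i$ the probability of flipping at most $k$ bits is exponentially small in $i$. Hence the sum is dominated by the $O(1)$ smallest values of $\gamma$, the tail $i > n/2$ is negligible, and a Poisson replacement with multiplicative $(1\pm o(1))$ error yields a constant numerator $c_{k,\beta}$ with
\[
c_{k,\beta} = (1\pm o(1)) \sum_{i \ge 1} i^{-\beta} e^{-i} \sum_{j=1}^{k} \frac{i^j}{j!}.
\]
This already gives $\Pr[1 \le \alpha \le k] = \Theta(1/H_{n/2,\beta})$, hence $C_{kn} = \Theta(H_{n/2,\beta}/H_{k,\beta})$, which is the crucial step.

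For the explicit interval I would bound the generalized harmonic number in the denominator. Because $\beta > 1$ the sum converges, and comparing it with an integral gives two-sided bounds: from $i^{-\beta} \ge \int_i^{i+1} x^{-\beta}\,dx$ we get $H_{n/2,\beta} \ge \int_1^{n/2+1} x^{-\beta}\,dx = \frac{1}{\beta-1}\bigl(1 - (1+\tfrac n2)^{1-\beta}\bigr) = \frac{1}{\beta-1} - o(1)$, while isolating the first term gives $H_{n/2,\beta} = 1 + \sum_{i\ge 2} i^{-\beta} \le 1 + \int_1^{\infty} x^{-\beta}\,dx = 1 + \frac{1}{\beta-1}$. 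Substituting $\frac{1}{\beta-1} - o(1) \le H_{n/2,\beta} \le 1 + \frac{1}{\beta-1}$ into $C_{kn} = \frac{H_{n/2,\beta}}{H_{k,\beta}}(1\pm o(1))$ yields exactly the claimed interval $C_{kn} \in \left[\frac{1/(\beta-1)-o(1)}{H_{k,\beta}}, \frac{1/(\beta-1)+1}{H_{k,\beta}}\right]$.

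The main obstacle is the precise evaluation of $\Pr[1 \le \alpha \le k]$, where the heavy-tailed choice of $\gamma$ meets the light (binomial) tail of $\alpha$ and both must be controlled uniformly. I would pay particular attention to whether this produces the leading constant $H_{k,\beta}$ exactly or only up to a constant factor, since my computation of $c_{k,\beta}$ above does not visibly collapse to $H_{k,\beta}$; fortunately the theorem ultimately asserts only the constant interval, so it suffices to sandwich $\Pr[1 \le \alpha \le k]$ between two constant multiples of $H_{k,\beta}/H_{n/2,\beta}$. For the lower bound I would keep only the terms $\gamma \in [1..k]$ and bound $\Pr[1 \le \operatorname{Bin}(n,\gamma/n) \le k]$ below by a constant; for the upper bound I would use $\Pr[1 \le \alpha \le k] \le \Pr[\alpha \ge 1] = \sum_i \frac{i^{-\beta}}{H_{n/2,\beta}}\bigl(1-(1-i/n)^n\bigr)$ and again exploit the exponential decay in $i$. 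These two estimates, together with the integral bounds on $H_{n/2,\beta}$, deliver the stated constant-factor window for $C_{kn}$.
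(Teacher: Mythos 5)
Your application of Theorem~\ref{th:runtime} and your integral bounds $\frac{1}{\beta-1}-o(1)\le H_{n/2,\beta}\le \frac{1}{\beta-1}+1$ coincide with what the paper does. The divergence, and the gap, is in the evaluation of $\Pr[1\le\alpha\le k]$. The paper's proof simply sets $\Pr[\alpha=i]=\Pr[\gamma=i]$, i.e.\ it treats the heavy-tailed operator as flipping exactly $\gamma$ uniformly chosen bits; then $\Pr[1\le\alpha\le k]=\sum_{i=1}^k i^{-\beta}/H_{n/2,\beta}=H_{k,\beta}/H_{n/2,\beta}$ holds exactly and the theorem, including the precise constant $\frac{H_{n/2,\beta}}{H_{k,\beta}}$, drops out in one line. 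You instead take $\alpha$ to be $\operatorname{Bin}(n,\gamma/n)$ and arrive at a mixture constant $c_{k,\beta}=(1\pm o(1))\sum_{i\ge1}i^{-\beta}e^{-i}\sum_{j=1}^k i^j/j!$, which, as you yourself observe, does not collapse to $H_{k,\beta}$ (for $k=\beta=2$ it is roughly $0.75$ against $H_{2,2}=1.25$). Your fallback of sandwiching $\Pr[1\le\alpha\le k]$ between two constant multiples of $H_{k,\beta}/H_{n/2,\beta}$ does not prove the theorem as stated: the statement asserts the exact value $C_{kn}=\frac{H_{n/2,\beta}}{H_{k,\beta}}(1+o(1))$, and the advertised interval is obtained precisely by substituting the integral bounds into that value, so a $\Theta$-estimate of the numerator recovers neither the equality nor the stated interval. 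The step you are missing is the exact identity $\Pr[1\le\alpha\le k]=H_{k,\beta}/H_{n/2,\beta}$, which requires reading $\alpha=\gamma$.

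To be fair, your hesitation is well founded: the paper's prose defines the fast genetic algorithm as standard bit mutation with random rate $\gamma/n$, under which your binomial computation is the correct one and the leading constant would genuinely be $H_{n/2,\beta}/c_{k,\beta}$ rather than $H_{n/2,\beta}/H_{k,\beta}$ (the values in Table~\ref{tbl:comparison} are computed with the latter). So your analysis exposes a real tension between the operator's definition and the proof, but as a derivation of the theorem as written it leaves the central constant unproved.
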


\begin{proof}
  From the definition of the fast \oea we have
  \begin{align*}
    \sum\limits_{i = 1}^k \Pr[\gamma = i] = \frac{\sum\limits_{i = 1}^k i^{-\beta}}{\sum\limits_{i = 1}^{n/2} i^{-\beta}} = \frac{H_{k, \beta}}{H_{n/2, \beta}}.
  \end{align*}

  Since $\beta > 1$ and $k$ are constants,  $H_{k, \beta}$ is a constant as well. We estimate $H_{n/2, \beta}$ through the corresponding integral.
  \begin{align*}
    \frac{1}{\beta - 1}+1 = \int\limits_{1}^{+\infty} x^{-\beta} dx + 1 \ge H_{n/2, \beta} \ge \int\limits_{1}^{n/2} x^{-\beta} dx = \frac{1 - (n/2)^{1 - \beta}}{\beta - 1} = \frac{1 - o(1)}{\beta - 1}.
  \end{align*}
  Notice that $\Pr[\alpha = 1] = \frac{H_{1, \beta}}{H_{n/2, \beta}} = \left(H_{n/2, \beta}\right)^{-1}$ is at least some constant. Thus, Theorem~\ref{th:runtime} gives an expected runtime of $E[T] = \frac{H_{n/2, \beta}}{H_{k, \beta}}N(1 + o(1))$, which we can estimate by
    \begin{align*}
       \frac{\frac{1-o(1)}{\beta - 1}}{H_{k, \beta}}\frac{n^k}{k!} \le E[T] \le \frac{\frac{1}{\beta - 1} + 1}{H_{k, \beta}}\frac{n^k}{k!}(1 + o(1)).
    \end{align*}
\end{proof}

\subsection{Hyper-Heuristics}\label{sec:hh}

Hyper-heuristics are randomized search heuristics that combine, in a suitable and again usually randomized fashion, simple low-level heuristics. Despite many success stories in applications, their theoretical understanding is still very low and only the last few years have seen some first results. These exclusively regard simple $(1+1)$ type hill-climbers which choose between different mutation operators as low-level heuristics. We now regard the hyper-heuristics discussed in~\cite{AlanaziL14} argue that for some of these, our method is applicable, whereas for others it is not clear how to do this.

Like almost all previous theoretical works, we regard as available low-level mutation operators one-bit flips (flipping a bit chosen uniformly at random) and two-bit flips (flipping two bits chosen uniformly at random from all 2-sets of bit positions). Hence the $(1+1)$ hill-climber with this a hyper-heuristic selection between these two operators starts with a random search point and then repeats generating a new search point by applying one of the mutation operators (chosen according to the hyper-heuristic) and accepting the new search point if it has an at least as good fitness as the parent.

The most elementary hyper-heuristic called \emph{simple random} in each iteration simply chooses one of the two available mutation operators with equal probability $1/2$. This compound mutation operator (choosing one randomly and applying it) still is a unary unbiased mutation operator, so our main result (Theorem~\ref{th:runtime}) is readily applicable and gives the following result.

\begin{theorem}
  Consider the $(1+1)$ hill-climber using the \emph{simple random} hyper-heuristic to decide between the one-bit flip and the two-bit flip mutation operator. When started on an arbitrary point of the plateau, its runtime $T$ on the $\plateau_k$, $k \ge 2$, function satisfies
  \[E[T] = \frac{n^k}{k!}(1 \pm o(1)) .\]
\end{theorem}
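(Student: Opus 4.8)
The plan is to apply the main result (Theorem~\ref{th:runtime}) directly, since the simple random hyper-heuristic defines a compound unary unbiased mutation operator. First I would verify that the hypotheses of Theorem~\ref{th:runtime} are satisfied. The compound operator flips exactly one bit with probability $\tfrac12$ (namely, when the one-bit flip operator is chosen) plus possibly a contribution from the two-bit flip, but the two-bit flip never produces a one-bit flip, so $\Pr[\alpha = 1] = \tfrac12$. This is a positive constant, hence certainly $\omega(n^{-\frac{1}{2k-2}})$, so the theorem applies.

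Next I would compute $\Pr[1 \le \alpha \le k]$ for this operator. Since the only values $\alpha$ takes are $1$ (from the one-bit flip, chosen with probability $\tfrac12$) and $2$ (from the two-bit flip, chosen with probability $\tfrac12$), and since $k \ge 2$, both outcomes fall in the range $[1..k]$. Therefore $\Pr[1 \le \alpha \le k] = 1$. Substituting into the expected-runtime formula of Theorem~\ref{th:runtime} gives
\begin{align*}
  E[T] = \frac{n^k}{k!\,\Pr[1 \le \alpha \le k]}(1 \pm o(1)) = \frac{n^k}{k!}(1 \pm o(1)),
\end{align*}
which is exactly the claimed bound.

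I do not anticipate any real obstacle here; this is essentially a sanity-check corollary demonstrating that the general theorem instantiates cleanly. The only point requiring a moment of care is the observation that the two-bit flip contributes nothing to $\Pr[\alpha = 1]$ (so the lower-bound hypothesis is met solely by the one-bit flip component) and, conversely, that both operators land inside the window $[1..k]$ precisely because $k \ge 2$ is assumed. Were we to have $k = 1$, the two-bit flip would fall outside the range and the computation would differ, but that case is excluded by standing assumption. As the surrounding discussion already notes, the transition to the plateau costs only $O(n\log n)$ expected iterations, which is negligible against $\Theta(n^k)$, so the runtime starting from an arbitrary point of the plateau coincides with the full runtime apart from lower-order terms.
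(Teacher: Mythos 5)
Your proposal is correct and follows exactly the paper's (implicit) argument: the compound operator is unary unbiased with $\Pr[\alpha=1]=\tfrac12$ and $\Pr[1\le\alpha\le k]=1$ for $k\ge 2$, so Theorem~\ref{th:runtime} applies directly and yields the claimed bound. The paper states this theorem without a separate proof for precisely this reason, so there is nothing to add.
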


The more interesting hyper-heuristic \emph{random gradient} in the first iteration chooses a random low-level heuristic. In each further iteration, it chooses the same low-level heuristic as in the previous iteration, if this has ended with a fitness gain, and it chooses again a random low-level heuristic otherwise. This way of performing mutation obviously cannot be described via a single unary unbiased operator. However, once the algorithm has reached the plateau, it can. The reason is that from that point on and until the optimum is found, no further improvements are found. Consequently, the algorithm reverts to the one using the \emph{simple random} approach.

\begin{corollary}
  Consider the $(1+1)$ hill-climber using the \emph{random gradient} hyper-heuristic to decide between the one-bit flip and the two-bit flip mutation operator. When started on an arbitrary point of the plateau, its runtime $T$ on the $\plateau_k$, $k \ge 2$, function satisfies
  \[E[T] = \frac{n^k}{k!}(1 + o(1)) .\]
\end{corollary}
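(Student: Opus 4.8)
The plan is to reduce the random gradient case to the already-established result for the simple random hyper-heuristic. The decisive structural fact is that the plateau of $\plateau_k$ has constant fitness $n-k$: once the current search point lies on the plateau, every iteration before the optimum is found produces either an equal-fitness accepted move (to another plateau point) or a strictly worse rejected move. In particular, no strict fitness improvement ever occurs while the process is on the plateau, up until the optimum is reached and the run terminates.

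First I would recall the defining rule of random gradient: it re-uses the low-level heuristic of the previous iteration precisely when that iteration yielded a strict fitness gain, and otherwise draws a fresh low-level heuristic uniformly at random. Combining this rule with the observation above, I would argue that on the plateau the ``repeat'' branch is never triggered: since there are no strict gains, every iteration selects one of the two operators (the one-bit flip or the two-bit flip) uniformly at random and independently of the history. Hence, from the moment the process is on the plateau, the random gradient hyper-heuristic is stochastically indistinguishable from the \emph{simple random} hyper-heuristic.

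Consequently I can invoke the preceding theorem for the simple random hyper-heuristic essentially verbatim. The simple random compound operator flips a single bit with probability $\Pr[\alpha=1]=\tfrac12$, which is a positive constant and therefore satisfies the hypothesis $\Pr[\alpha=1]=\omega(n^{-1/(2k-2)})$ of Theorem~\ref{th:runtime}; moreover the two operators together flip between $1$ and $k$ bits with probability $\Pr[1\le\alpha\le k]=1$ (for $k\ge 2$). Thus the expected runtime equals $\frac{n^k}{k!}(1+o(1))$, as claimed.

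The only point requiring care is the internal memory of the hyper-heuristic at the instant the plateau is reached. If the phrase ``arbitrary point of the plateau'' is paired with an arbitrary memory state, the very first iteration might re-use a previously successful operator; but after this single transient iteration the ``no gain'' condition is permanently in force, so at most $O(1)$ iterations can deviate from the simple random behaviour. Since the total runtime is $\Theta(n^k)$, this transient is absorbed into the $(1+o(1))$ factor, and the identification with the simple random process---and hence the runtime bound---carries over. I expect this bookkeeping about the transient, rather than any genuine analytic difficulty, to be the main (and only mild) obstacle.
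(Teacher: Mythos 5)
Your proposal is correct and follows essentially the same route as the paper: since no strict fitness gains occur on the plateau, the \emph{random gradient} rule never triggers its ``repeat'' branch, so the process coincides with the \emph{simple random} hyper-heuristic and Theorem~\ref{th:runtime} applies with $\Pr[1 \le \alpha \le k] = 1$. Your extra remark about the at most $O(1)$ transient iterations from the initial memory state is a minor refinement the paper leaves implicit.
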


For two other common hyper-heuristics, we currently do not see how to apply our methods. The \emph{permutation} heuristic initially fixes a permutation of the low-level heuristics and then repeatedly uses them in this order. The \emph{greedy} heuristic uses, in each iteration, all available hyper-heuristics in parallel and proceeds with the best offspring produced (if it is at least as good as the parent). While we are optimistic that these heuristics lead to asymptotically the same runtimes as the two heuristics just analyzed, we cannot prove this since our main result is not applicable.

It has been observed in~\cite{LissovoiOW17} that, due to the generally low probability of finding an improvement, better results are obtained when the \emph{random gradient} heuristic is used with a longer learning period, that is, the randomly chosen low-level heuristic is repeated for a phase of $\tau$ iterations. If an improvement is found, a new phase with the same low-level heuristic is started. Otherwise, the next phase starts with a random operator. This idea was extended in~\cite{DoerrLOW18} so that now a phase was called successful if within $\tau$ iterations a certain number $\sigma$ of improvements were obtained. This mechanism was more stable and allowed a self-adjusting choice of the previously delicate parameter $\tau$. Again, for these hyper-heuristics our results are not applicable.

\subsection{Comparison for Concrete Values}

Since the leading constants computed above, in their general form, are hard to compare, we now provide in Table~\ref{tbl:comparison} a few explicit values for specific algorithm parameters and plateau sizes. 

\begin{table}[h]
\begin{center}
\begin{tabular}{|p{5cm}|p{3cm}||c|c|c|}
\hline
\multicolumn{2}{|l||}{Algorithm} & $k = 2$ & $k = 4$ & $k = 6$ \\ \hline\hline
\multicolumn{2}{|l||}{Random Local Search} & 1 & 1 & 1 \\ \hline
\multirow{2}{5cm}{\oea with standard bit mutation} & Mutation rate $1/n$ & $1.812$ & $1.591$ & $1.582$ \\ \cline{2-5}
& Mutation rate $k/(en)$ & $2.074$ & $1.328$ & $1.027$ \\ \hline
\multirow{2}{5cm}{Fast Genetic Algorithm} & $\beta = 1.5$ & $1.930$ & $1.563$ & $1.428$  \\ \cline{2-5}
& $\beta = 2$ & $1.316$ & $1.155$ & $1.103$ \\ \hline
\multirow{2}{5cm}{\oea with hyperheuristics} & simple random & 1 & 1 & 1 \\ \cline{2-5}
& random gradient & 1 & 1 & 1 \\ \hline
\end{tabular}
\end{center}
\caption{Comparison of the leading constant in the expected runtime of the evolutionary algorithms with different mutation operators on the $\plateau_k$ function, that is, the constant $c$, such that the expected runtime is $c\frac{n^k}{k!}(1 - o(1))$.}
\label{tbl:comparison}
\end{table}

\section{Conclusion}\label{conclusion}

In this paper we developed a new method to analyze the runtime of evolutionary algorithms on plateaus. This method does not depend on the particular mutation operator used by the EA as long as there is a sufficiently large probability to flip a single random bit. We performed a very precise analysis on the particular class of plateau functions, but we are optimistic that similar methods can be applied for the analysis of other plateaus. For example, Lemmas~\ref{lm:spectrum},~\ref{th:basis} and~\ref{th:uniform} remain true for those plateaus of the function \xdivk (that is defined as $\lfloor\onemax(x) / k\rfloor$ for some parameter $k$) that are in a constant Hamming distance from the optimum (and these are the plateaus which contribute most to the runtime). That said, the proof of Lemma~\ref{lm:spectrum} would need to be adapted to these plateaus different from the one of our plateau function. We are optimistic that this can be done, but leave it as an open problem for now.

The inspiration for our analysis method stems from the observation that the algorithm spends a relatively long time on the plateau. So regardless of the initial distribution on the plateau, the distribution of the individual converges to the conditional stationary distribution long before the algorithm leaves the plateau. This indicates that our method is less suitable to analyze how evolutionary algorithms leave plateaus which are easy to leave, but such plateaus usually present not bigger problems in optimization.

Overall, we are optimistic that our main analysis method, switching between the level chain and the individual chain, which might be the first attempt to devise a general analysis method for EAs on plateaus, will find further applications.

While our analysis method can deal with a large class of $(1+1)$-type hill-climbers, it is currently less clear how to analyze population-based algorithms. A series of works~\cite{he-yao-drift-intro,HeY04,JansenJW05,Witt06,JagerskupperS07,Chen09,RoweS14,DoerrK15,AntipovDFH18,AntipovDY19,DoerrDE15,DoerrD18} analyzing the runtime of various versions of the \ea, \commaea, and \ollga on \onemax show that these algorithms quickly reach the plateau of the \plateau function, but it is currently not clear how to extend our method to get sharp runtime estimates also for the part of the process on the plateau. Likewise, it is not clear how our methods can be extended to algorithms that dynamically change their parameters~\cite{DoerrD20bookchapter}, because here in most cases the relevant state of the algorithm not only consists of the current search point(s). For hyper-heuristics, we could show two elementary results, but again, as discussed in Section~\ref{sec:hh}, for most hyper-heuristics our general result cannot be applied. By analogy with jump functions, crossover-based algorithms should be efficient on plateaus, especially the ones using different diversity mechanisms~\cite{DangFKKLOSS18}. However, these algorithms are more complicated, and even on jump functions there are no asymptotically tight bounds on their runtime. This suggest that studying their behavior on plateaus might be even more complicated. Given that plateaus of constant fitness appear frequently in optimization problems, we feel that the open questions discussed in this paragraph are worth pursuing in the near future.

\section*{Acknowledgements}

This work was financially supported by the National Center for Cognitive Research of ITMO University and by a public grant as part of the Investissements d'avenir project, reference ANR-11-LABX-0056-LMH, LabEx LMH.

\appendix
%\newpage
\section{Appendix}

Since both a reviewer of our submission to the GECCO 2018 theory track and (after rejection of the former) a reviewer of PPSN 2018 claimed that our result is already wrong for the small case $k=2$ and a specific mutation operator, to clarify the situation and to avoid similar problems in future reviewing processes, we analyze now the case $k=2$ in full generality %(that is, for all possible unbiased mutation operators with $\Omega(1)$ probability to flip exactly one bit)
by elementary means. This proves the reviewers' claims wrong and shows that the case $k=2$ can be solved by regarding a simple system of equations, whose unique solution agrees with our main results.

% \merk{describe precisely the reviewer claims for each conference}
We start by describing the reviewers' incorrect concerns. The reviewer of our submission to the GECCO 2018 theory track wrongfully considers the following a counterexample. Suppose $k = 2$ and suppose the mutation operator flips either $1$ or $2$ randomly chosen bits, each with probability $\frac{1}{2}$. Then our theorem gives an expected runtime of $E[T] = n^2/2 (1 + o(1))$, but the reviewer claims that the expected runtime is $n^2(1 + o(1))$, referring to own calculations not provided.

The reviewer of PPSN 2018 suggested a more general counterexample. She or he considers again $k = 2$ and the mutation operator that flips one randomly chosen bit with probability $p_1$ (where $p_1 = \Omega(1)$) and it flips two randomly chosen bits with probability $p_2$. The reviewer claims, again without giving details, that the expected runtime of the described algorithm on the $\plateau_2$ function is $ \frac{n^2}{2p_1 + p_2}(1 + o(1))$, while our Theorem~\ref{th:runtime} gives an expected runtime of $E[T] = \frac{n^2}{2(p_1 + p_2)}(1 + o(1))$.

To cover both examples and possible future ones, we consider the case $k = 2$ for a general unbiased mutation operator (with probability to flip exactly one bit of $\Omega(1)$). For brevity and to match the notation of the latter reviewer we define $p_i$ as the probability that the mutation operator flips exactly $i$ bits (that was denoted as $\Pr[\alpha = i]$ in the main part of the paper). We recall from the body of the paper that there is a one-to-one correspondence between unary unbiased mutation operators and vectors $p = (p_0,p_1, \dots, p_n) \in [0,1]^{n+1}$ with $\|p\|_1=1$.

\begin{lemma}
  If $p_1 = \Omega(1)$, then the runtime of the \oea with an arbitrary unbiased mutation operator as described above optimizing the $n$-dimensional $\plateau_2$ function is $\frac{n^2}{2(p_1 + p_2)}(1 + o(1))$.
\end{lemma}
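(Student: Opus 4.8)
The plan is to exploit that for $k=2$ the plateau has only the two levels $0$ and $1$, so the level chain has just two transient states and the optimum as absorbing state. This lets me bypass the spectral machinery of Section~\ref{spectrum} entirely and instead solve a $2\times 2$ linear system for the expected hitting times directly. First I would compute, to leading order in $n$, the four transition probabilities that matter, either from the general formula~\eqref{eq:trans_prob} specialized to $k=2$ or by elementary direct counting. Reaching the optimum from a level-$1$ point requires flipping exactly the unique remaining zero-bit, so $p_1^2 = p_1/n$; reaching it from a level-$0$ point requires flipping exactly the two zero-bits, so $p_0^2 = 2p_2/(n(n-1))$; the upward step $p_0^1 = (2p_1/n)(1+o(1))$; and the dominant downward step is $p_1^0 = p_1(1+o(1))$, because from level $1$ almost every single-bit flip hits one of the $n-1$ one-bits and returns the search point to level $0$.

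Next I would set up the hitting-time equations $h_0 = 1 + p_0^0 h_0 + p_0^1 h_1$ and $h_1 = 1 + p_1^0 h_0 + p_1^1 h_1$, where $h_i$ is the expected number of iterations to reach the optimum starting from a point of level $i$ (well-defined by the within-level symmetry that unbiasedness provides) and the diagonal entries absorb both same-level moves and rejected mutations through $p_i^i = 1 - \sum_{j \neq i} p_i^j$. Using $1 - p_i^i = \sum_{j \neq i} p_i^j$ and eliminating $h_1$, the system solves in closed form to
\begin{align*}
  h_0 = \frac{p_0^1 + p_1^0 + p_1^2}{p_0^1 p_1^2 + p_0^2 p_1^0 + p_0^2 p_1^2}.
\end{align*}
Substituting the leading-order values yields numerator $p_1(1+o(1))$ and denominator $(2p_1(p_1+p_2)/n^2)(1+o(1))$, whence $h_0 = (n^2/(2(p_1+p_2)))(1+o(1))$. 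Plugging the same values into $h_1 = (1 + p_1^0 h_0)/(p_1^0 + p_1^2)$ gives the identical asymptotic value, so the claimed runtime holds uniformly for both possible starting levels on the plateau, and in fact for any starting distribution over the plateau.

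The hard part will not be the algebra but the asymptotic bookkeeping, since the only hypothesis is $p_1 = \Omega(1)$ while $p_2, p_3, \dots$ may be arbitrary subject to summing to at most $1$. I must check that every discarded contribution is genuinely of lower order: the $p_3$-correction to $p_1^0$ is $O(1/n)$ and hence $o(1) = o(p_1)$; the numerator terms $p_0^1$ and $p_1^2$ are $O(1/n)$ and negligible against $p_1^0 = \Omega(1)$; and in the denominator the term $p_0^2 p_1^2 = O(1/n^3)$ is dominated by $p_0^1 p_1^2$ and $p_0^2 p_1^0$, which are both $\Theta(1/n^2)$. I would also verify robustness across the full range of $p_2$: whether $p_2 = \Theta(p_1)$, $p_2 = o(p_1)$, or $p_2 = 0$, the factor $p_1 + p_2$ is captured correctly because $p_0^2 p_1^0 = (2p_1 p_2/n^2)(1+o(1))$ tracks the $p_2$-dependence exactly. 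Finally I would remark that $n^2/(2(p_1+p_2))$ agrees with Theorem~\ref{th:runtime} for $k=2$, where $\Pr[1 \le \alpha \le 2] = p_1 + p_2$ and $N = \binom{n}{2} + \binom{n}{1} = (n^2/2)(1+o(1))$, thereby contradicting both reviewers' claimed values of $n^2(1+o(1))$ and $n^2/(2p_1+p_2)(1+o(1))$.
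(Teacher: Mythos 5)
Your proposal is correct and follows essentially the same route as the paper's appendix proof: the same two-state level chain, the same hitting-time system with the identical closed-form solution $h_0 = (p_0^1 + p_1^0 + p_1^2)/(p_0^1 p_1^2 + p_0^2 p_1^0 + p_0^2 p_1^2)$, and the same leading-order transition probabilities. The extra care you take with the asymptotic bookkeeping (in particular the case distinction on the size of $p_2$) is sound and consistent with the paper's computation.
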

\begin{proof}
To find the expected runtime of the algorithm on the plateau we consider the level chain. It contains two states (for level $0$ and level $1$), but additionally to find the expected runtime we now include the optimum into the chain as a new state called level $2$.

By $p_i^j$ we denote the transition probabilities between levels for $i \in [0..1]$ and $j \in [0..2]$. We define $T_i$ as the runtime of the algorithm if it starts on level $i$ for $i \in [0..1]$. The following system of equations follows from elementary Markov chain theory.

\begin{align*}
E[T_0] &= 1 + p_0^0 E[T_0] + p_0^1 E[T_1]. \\
E[T_1] &= 1 + p_1^0 E[T_0] + p_0^0 E[T_1]. \\
\end{align*}

By elementary transformations we obtain the following equivalent system. %, that consists of expressions for $T_0$ and $T_1$.%\merk{I guess we should mention that this is an equivalent system?}

\begin{equation}\label{eq:runtme_expressions}
\begin{split}
E[T_0] &= \frac{p_1^0 + p_0^1 + p_1^2}{p_1^2 p_0^1 + p_1^0 p_0^2 + p_1^2 p_0^2}\,. \\
E[T_1] &= \frac{p_1^0 + p_0^1 + p_0^2}{p_1^2 p_0^1 + p_1^0 p_0^2 + p_1^2 p_0^2}\,. \\
\end{split}
\end{equation}

%\merk{I'd put the real maths here (the system of equations and the solution to it, then then do the boring maths}
To evaluate the right-hand sides, we first compute all the transition probabilities.

\begin{itemize}
\item $p_0^1$ is the probability to either flip one zero-bit or to flip both zero-bits and one one-bit, that is,
$$p_0^1 = p_1 \frac{2}{n} + p_3 \frac{6}{n(n - 1)} = p_1 \frac{2}{n} + o(1/n).$$
Recall that $p_1$ is considered as some positive constant.
\item $p_0^2$ is the probability to flip both zero-bits, that is,
$$p_0^2 = p_2 \frac{2}{n(n - 1)}.$$
\item $p_1^0$ is the probability to either flip one one-bit or to flip two one-bits and the only zero-bit, that is,
$$p_1^0 = p_1 \frac{n - 1}{n} + p_3 \frac{3}{n} = p_1 + o(1).$$%\merk{$p_1 + o(1)$?}
\item $p_1^2$ is the probability to flip the only zero-bit, that is,
$$p_1^2 = p_1 \frac{1}{n}.$$
\item In other cases the mutation operator generates either an individual with the same number of one-bits or an individual from outside the plateau, so the algorithm does not accept it. Therefore, $p_0^0 = 1 - p_0^1 - p_0^2$ and $p_1^1 = 1 - p_1^0 - p_1^2$.
\end{itemize}
We compute the numerators and denominator in the right-hand sides of~\eqref{eq:runtme_expressions}.
\begin{itemize}
\item $p_1^0 + p_0^1 + p_1^2 = p_1 + o(1) + p_1 \frac{2}{n} + o(1/n) + p_1 \frac{1}{n} = p_1 + o(1).$
\item $p_1^0 + p_0^1 + p_0^2 = p_1 + o(1) + p_1 \frac{2}{n} + o(1/n) + p_2 \frac{2}{n(n - 1)} = p_1 + o(1).$
\item $p_1^2 p_0^1 + p_1^0 p_0^2 + p_1^2 p_0^2 = (p_1)^2 \frac{2}{n^2} + p_1 p_2 \frac{2}{n^2} + o(1/n^2)$.
\end{itemize}
This gives the desired values for the expected runtimes.
\begin{align*}
E[T_0] &= \frac{p_1 + o(1)}{p_1(p_1 + p_2)\frac{2}{n^2} + o(1/n^2)} = \frac{n^2}{2(p_1 + p_2)}(1 + o(1)),\\
E[T_1] &= \frac{p_1 + o(1)}{p_1(p_1 + p_2)\frac{2}{n^2} + o(1/n^2)} = \frac{n^2}{2(p_1 + p_2)}(1 + o(1)). \\
\end{align*}

So independently on the starting state we have precisely the same expected runtime (apart from the lower order terms ignored in both cases) as obtained through Theorem~\ref{th:runtime}.

\end{proof}

% \bibliographystyle{alpha}
% \bibliography{bibliography}

\newcommand{\etalchar}[1]{$^{#1}$}

\end{document}